\definecolor{shadecolor}{gray}{0.9}
\colorlet{DarkBlue}{black!50!blue!100}
\colorlet{DarkRed}{black!15!red!100}
\pgfplotsset{compat=1.3}
\definecolor{hexcolor0xbfbfbf}{rgb}{0.749,0.749,0.749}
\tikzset{>=latex}
\tikzstyle{none}   = [inner sep=0pt]
\tikzstyle{line}   = [ -, thick, shorten <=1pt, shorten >=1pt ]
\tikzstyle{arrow}  = [ ->, thick, shorten <=1pt, shorten >=1pt ]
\tikzstyle{ardash} = [ dashed, ->, thick, shorten <=1pt, shorten >=1pt ]
\tikzstyle{empty}=[circle,opacity=0.0,text opacity=1.0,inner sep=0pt]
\tikzstyle{box}=[rectangle,fill=White,draw=Black]
\tikzstyle{filled}=[circle,thick,fill=hexcolor0xbfbfbf,draw=Black]
\tikzstyle{hollow}=[circle,thick,fill=White,draw=Black]
\tikzstyle{param}=[rectangle,fill=Black,draw=Black,inner sep=0pt,minimum width=4pt,minimum height=4pt]
\tikzstyle{paramhollow}=[rectangle,thick,fill=White,draw=Black,inner sep=0pt,minimum
\tikzset{
  myarrow/.style={stealth-,shorten >=3pt,shorten <=3pt}
}
\tikzset{
testarrow/.style={draw,decoration={markings, mark=at position 1 with {\arrow{>}}},postaction={decorate}}
}
\newtheorem{thm}{Theorem}
\newtheorem{defn}{Definition}
\newtheorem{lem}{Lemma}
\newtheorem{cor}{Corollary}
\DeclarePairedDelimiterX{\inp}[2]{\langle}{\rangle}{#1, #2} 
\DeclareMathOperator*{\argmin}{arg\,min  \ }
\DeclareMathOperator{\rR}{{\mathbb{R}}}
\DeclareMathOperator{\cU}{\mathcal{U}}
\DeclareMathOperator{\cW}{\mathcal{W}}
\DeclareMathOperator{\cX}{\mathcal{X}}
\crefname{equation}{Eq.}{Eqs.}
\crefname{lem}{lemma}{lemmas}
\crefname{thm}{theorem}{theorems}
\crefname{ass}{assumption}{assumptions}
\crefname{defn}{definition}{definitions}
\crefname{cor}{corollary}{corollaries}
\title{Optimistic Meta-Gradients}
\author{%
  Sebastian Flennerhag\\
  DeepMind\\
  \texttt{flennerhag@google.com}\\
  \And
  Tom Zahavy\\
  DeepMind\\
  \\
  \And
  Brendan O'Donoghue\\
  DeepMind\\
  \\
  \And
  Hado van Hasselt\\
  DeepMind\\
  \And
  Andr{\'a}s Gy{\"o}rgy\\
  DeepMind\\
  \And
  Satinder Singh\\
  DeepMind\\
}
\begin{document}

\maketitle

\begin{abstract}
We study the connection between gradient-based meta-learning and convex optimisation. We observe that gradient descent with momentum is a special case of meta-gradients, and building on recent results in optimisation, we prove convergence rates for meta-learning in the single task setting. While a meta-learned update rule can yield faster convergence up to constant factor, it is not sufficient for acceleration. Instead, some form of optimism is required. We show that optimism in meta-learning can be captured through Bootstrapped Meta-Gradients \citep{flennerhag2022bootstrapped}, providing deeper insight into its underlying mechanics.
\end{abstract}

\section{Introduction}

\begin{wrapfigure}{r}{0.5\linewidth}
    \centering
    \vspace*{-18pt}
    \includegraphics[width=.97\linewidth]{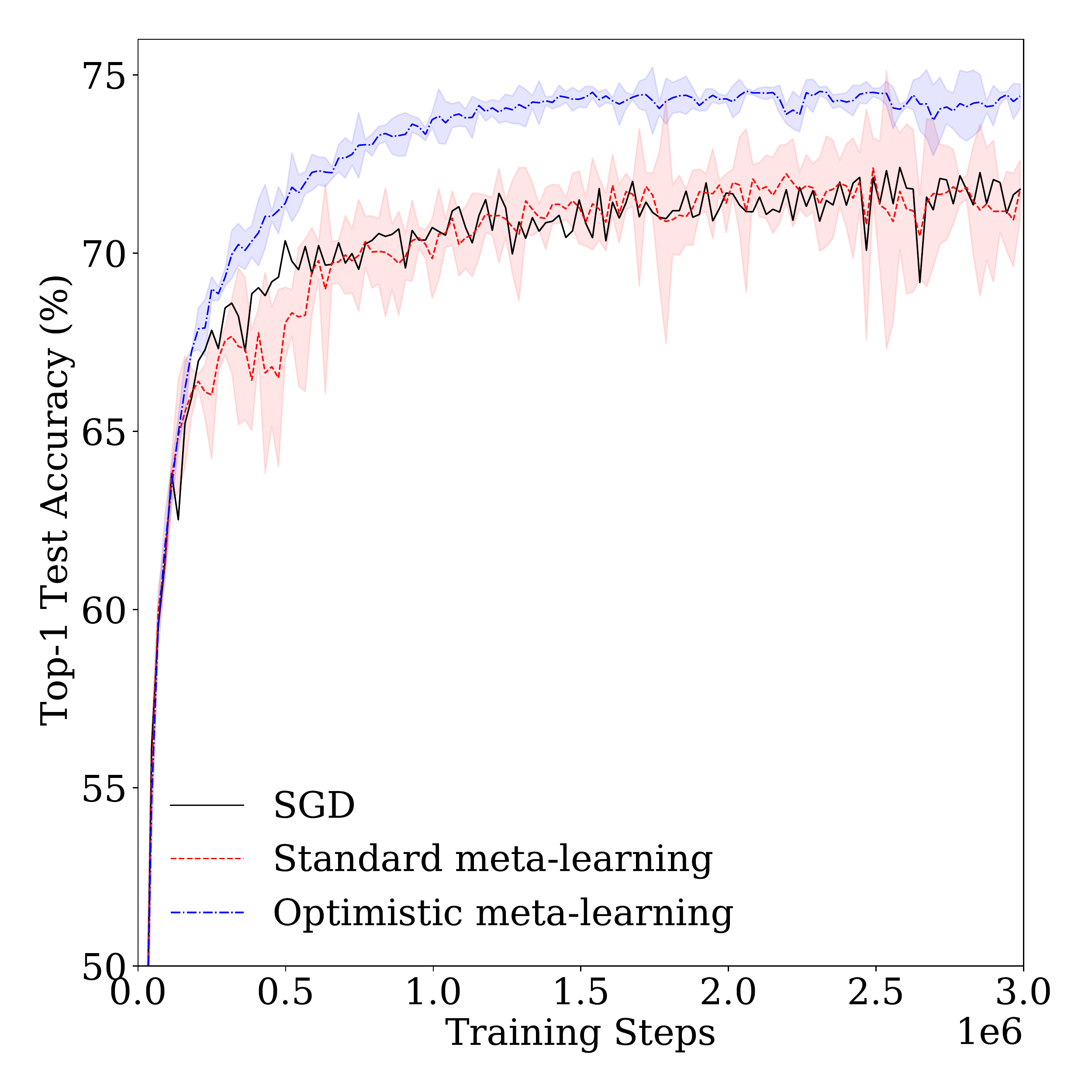}
    \caption{ImageNet. We compare training a 50-layer ResNet using SGD against variants that tune an element-wise learning rate online using standard meta-learning or optimistic meta-learning. Shading depicts 95\% confidence intervals over 3 seeds.}
    \label{fig:imagenet}
    \vspace*{-14pt}
\end{wrapfigure}

In meta-learning, a learner is using a parameterised algorithm to adapt to a given task. The parameters of the algorithm are then meta-learned by evaluating the learner's resulting performance \citep{Schmidhuber:1987ev,Hinton:1987fa,Bengio:1991le}. This paradigm has garnered wide empirical success \citep{Hospedales:2020survey}. For instance, it has been used to meta-learn how to explore in reinforcement learning (RL) \citep{Xu:2018explore,Alet:2020me}, online hyper-parameter tuning of non-convex loss functions \citep{Bengio:2000hypergradients,Maclaurin:2015hypergradients,Xu:2018metagradient,Zahavy:2020se}, discovering black-box loss functions \citep{Chen2016:le,Kirsch:2019im,xu2020meta,Oh:2020discoveringrl}, black-box learning algorithms \citep{Hochreiter:2001le,Wang:2016re}, or entire training protocols \citep{Real:2020ze}. Yet, very little is known in terms of the theoretical properties of meta-learning. 

The reason for this is the complex interaction between the learner and the meta-learner. \textbf{learner's problem} is to minimize the expected loss $f$ of a stochastic objective by adapting its parameters $x \in \rR^{n}$. The learner has an update rule $\varphi$ at its disposal that generates new parameters $x_{t} = x_{t-1} + \varphi(x_{t-1}, w_t)$; we suppress data dependence to simplify notation. A simple example is when $\varphi$ represents gradient descent with $w_t = \eta$ its step size, that is $\varphi(x_{t-1}, \eta) = -\eta \nabla f(x_{t-1})$ \citep{Mahmood2012tuning,vanErven:2016metagrad}; several works have explored meta-learning other aspects of a gradient-based update rule \citep{Finn:2017maml,Nichol:2018uo,Flennerhag:2019tl,Xu:2018metagradient,Zahavy:2020se,flennerhag2022bootstrapped,Kirsch:2019im,Oh:2020discoveringrl}. More generally, $\varphi$ need not be limited to the gradient of any function, for instance, it can represent some algorithm implemented within a Recurrent Neural Network \citep{Schmidhuber:1987ev,Hochreiter:2001le,Andrychowicz:2016tf,Wang:2016re}. 

\textbf{The meta-learner's problem} is to optimise the meta-parameters $w_t$ to yield effective \emph{updates}. In a typical (gradient-based) meta-learning setting, it does so by treating $x_{t}$ as a function of $w$. Let $h_t$, defined by $h_t(w) = f(x_{t-1} + \varphi(x_{t-1}, w))$, denote the learner's post-update performance as a function of $w$. The learner and the meta-learner co-evolve according to
\begin{align*}
x_{t} = x_{t-1} + \varphi(x_{t-1}, w_t), \qquad  \text{and} \qquad w_{t+1} &= w_t - \nabla h_t(w_t)\\
&= w_t - D \varphi(x_{t-1}, w_t)^T \nabla f(x_t),
\end{align*}
where $D \varphi(x, w)$ denotes the Jacobian of $\varphi$ with respect to $w$. The nested structure between these two updates makes it challenging to analyse meta-learning, in particular it depends heavily on the properties of the Jacobian. In practice, $\varphi$ is highly complex and so $D \varphi$ is almost always intractable. For instance, in \citet{Xu:2018explore}, the meta-parameters define the data-distribution under which a stochastic gradient is computed. In \citet{Zahavy:2020se}, the meta-parameters define auxiliary objectives that are meant to help with representation learning; in \citet{Vinyals:2016uj} they learn an embedding space for nearest-neighbour predictions.

For this reason, the only theoretical results we are aware of specialise to the multi-task setting and assume $\varphi$ represents adaptation by gradient descent. In this setting, at each iteration $t$, the learner must adapt to a new task $f_t$. The learner adapts by taking a (or several) gradient step(s) on $f_t$ using either a meta-learned initialisation \citep{Flennerhag:2019tl,finn2019online,Fallah:2020maml,Wang:2022maml} or using a meta-learned regulariser \citep{khodak2019adaptive,Denevi:2019online}. Because the update rule has this form, it is possible to treat the meta-optimisation problem as an online learning problem and derive convergence guarantees. Acceleration in this setup is driven by the tasks similarity. That is, if all tasks are sufficiently similar, a meta-learned update can accelerate convergence \citep{khodak2019adaptive}. However, these results do not yield acceleration in the absence of a task distribution to the best of our knowledge.

This paper provides an alternative view. We study the classical convex optimisation setting of approximating the minimiser $\min_{x} f(x)$. We observe that setting the update rule equal to the gradient, i.e. $\varphi: (x, w) \mapsto w \nabla f(x)$, recovers gradient descent. Similarly, we show in \Cref{sec:res} that $\varphi$ can be chosen to recover gradient descent with momentum. This offers another view of meta-learning as a non-linear transformation of classical optimisation. A direct implication of this is that a task similarity is not necessary condition for improving the rate of convergence via meta-learning. While there is ample empirical evidence to that effect \citep{Xu:2018metagradient,Zahavy:2020se,flennerhag2022bootstrapped, Luketina:2022metagradients}, we are only aware of theoretical results in the special case of meta-learned step sizes \citep{Mahmood2012tuning,vanErven:2016metagrad}. 

In particular, we analyse meta-learning using recent techniques developed for convex optimisation \citep{Cutkosky:2019anytime,Joulani:2020simpler,Wang:2021FenchelGame}. Given a function $f$ that is convex with Lipschitz smooth gradients, meta-learning improves the rate of convergence by a multiplicative factor $\lambda$ to $O(\lambda/T)$, via the smoothness of the update rule. Importantly, these works show that to achieve accelerated convergence, $O(1/T^2)$, some form of optimism is required. This optimism essentially provides a prediction of the next gradient, and hence represents a model of the geometry. We consider optimism with meta-learning in the convex setting and prove accelerated rates of convergence, $O(\lambda / T^2)$. Again, meta-learning affects these bounds by a multiplicative factor. We further show that optimism in meta-learning can be expressed through the recently proposed Bootstrapped Meta-Gradient method \citep[BMG;][]{flennerhag2022bootstrapped}. Our analysis provides a first proof of convergence for BMG and highlights the underlying mechanics that enable faster learning with BMG. Our main contributions are as follows: 
\begin{enumerate}[noitemsep]
  \item We show that meta-learning contains gradient descent with momentum (Heavy Ball \citep{Polyak:1964heavyball}; \Cref{sec:res}) and Nesterov Acceleration \citep{nesterov1983method} as special cases (\Cref{sec:bmg}). 
  \item We show that gradient-based meta-learning can be understood as a non-linear transformation of an underlying optimisation method (\Cref{sec:res}).
  \item We establish rates of convergence for meta-learning in the convex setting (\Cref{sec:main,sec:bmg}).
  \item We show that optimism can be expressed through \citep{flennerhag2022bootstrapped}. Our analysis (\Cref{sec:bmg}) provides a first proof of convergence for BMG.
\end{enumerate}

\begin{minipage}[t]{0.44\linewidth}
\begin{algorithm}[H]
\DontPrintSemicolon
\Indp
\SetKwInOut{Input}{input}
\SetKwInOut{Empty}{}
\SetKwFor{For}{for }{}{}
\Input{Weights $\{\beta_{t}\}^T_{t=1}$}
\Input{Update rule $\varphi$}
\Input{Initialisation $(x_0, w_1)$}
\For{$t = 1, 2, \ldots, T$:}{
$x_{t} = x_{t-1} + \varphi(x_{t-1}, w_t)$ \\
$h_t(\cdot) = f(x_{t-1} +  \rho_t\varphi(x_{t-1}, \cdot))$\\
$w_{t+1} = w_t - \beta_t \nabla h_t(w_t)$
}
\Return{$x_T$}
\;
\;
\caption{Meta-learning in practice.}\label{alg:mg}
\end{algorithm}
\end{minipage}
\hfill
\begin{minipage}[t]{0.55\textwidth}
\begin{algorithm}[H]
\DontPrintSemicolon
\Indp
\SetKwInOut{Input}{input}
\SetKwFor{For}{for }{}{}
\Input{Weights $\{\alpha_{t}\}^T_{t=1}, \{\beta_{t}\}^T_{t=1}$}
\Input{Update rule $\varphi$}
\Input{Initialisation $(\bar{x}_0, w_1)$}
\For{$t = 1, 2, \ldots, T$:}{
$x_t = \varphi(\bar{x}_{t-1}, w_t)$\\
$\bar{x}_t = (1-\alpha_t/\alpha_{1:t})\bar{x}_{t-1} + (\alpha_t/\alpha_{1:t}) x_t$\\
$g_t = D \varphi(\bar{x}_{t-1}, w_t)^T \nabla f(\bar{x}_t)$ \\
$w_{t+1} \!\! = \! \arg\min_{w \in \cW}\! \sum_{s=1}^t \! \alpha_s \inp{g_s}{w} \! +\!  \frac{1}{2\beta_t}\| w \|^2\!\!\!\!\!\!\!\!\!\!$
}
\Return{$\bar{x}_T$}
\caption{Meta-learning in the convex setting.}\label{alg:model}
\end{algorithm}
\end{minipage}

\section{Meta-learning meets convex optimisation}\label{sec:setup}

\paragraph{Problem definition.} This section defines the problem studied in this paper and introduces our notation. Let $f: \cX \to \rR$ be a proper and convex function. The problem of interest is to approximate the global minimum $\min_{x \in \cX} f(x)$. We assume a global minimiser exists and is unique, defined by
\begin{equation}\label{eq:problem}
x^* = \argmin_{x \in \cX} f(x).
\end{equation}

We assume that $\cX \subseteq \rR^n$ is a closed, convex and non-empty set. $f$ is differentiable and has Lipschitz smooth gradients with respect to a norm $\| \cdot \|$, meaning that there exists $L \in (0, \infty)$ such that $\| \nabla f(x) - \nabla f(y) \|_{*} \leq L \| x - y \|$ for all $x, y \in \cX$, where $\| \cdot \|_{*}$ is the dual norm of $\| \cdot \|$. We consider the noiseless setting for simplicity; our results carry over to the stochastic setting by replacing the key online-to-batch bound used in our analysis by its stochastic counterpart \citep{Joulani:2020simpler}.

\paragraph{Algorithm.} \Cref{alg:mg} describes a typical meta-learning algorithm. Unfortunately, at this level of generality, little can be said about the its convergence properties. Instead, we consider a stylized variant of meta-learning, described in \Cref{alg:model}. This model differs in three regards: (a) it relies on moving averages (b) we use a different online learning algorithm for the meta-update, and (c) we make stricter assumptions on the update rule. We describe each component in turn.

Let $[T] = \{1, 2, \ldots, T\}$. We are given weights $\{\alpha_{t}\}_{t=1}^T$, each $\alpha_t > 0$, and an initialisation $(\bar{x}_0, w_1) \in \cX \times \cW$. At each time $t \in [T]$, an update rule $\varphi: \cX \times \cW \to \cX$ generates the update $x_t = \varphi(\bar{x}_{t-1}, w_t)$, where $\cW \subseteq \rR^m$ is closed, convex, and non-empty. We discuss $\varphi$ momentarily. The algorithm maintains the online average
\begin{equation}\label{eq:avg}
\bar{x}_t = \frac{x_{1:t}}{\alpha_{1:t}} = (1-\rho_t) \bar{x}_{t-1} + \rho_t x_t,
\end{equation}
where $x_{1:t} = \sum_{s=1}^t \alpha_s x_s$, $\alpha_{1:t} = \sum_{s=1}^t \alpha_s$, and $\rho_t = \alpha_t / \alpha_{1:t}$. Our goal is to establish conditions under which $\{\bar{x}_t\}_{t=1}^T$ converges to the minimiser $x^*$. While this moving average is not always used in practical applications, it is required for accelerated rates in online-to-batch conversion \citep{Wang:2018acc,Cutkosky:2019anytime,Joulani:2020simpler}. 

Convergence depends on how each $w_t$ is chosen. In \Cref{alg:mg}, the meta-learner faces a sequence of losses $h_t: \cW \to \rR$ defined by the composition $h_t(w) = f((1-\rho_t) \bar{x}_{t-1} + \rho_t \varphi(\bar{x}_{t-1}, w))$. This makes meta-learning a form of online optimisation \citep{Mcmahan2017survey}. The meta-updates in \Cref{alg:mg} is an instance of online gradient descent, which we can model as Follow-The-Regularized-Leader (FTRL; reviewed in \Cref{sec:bg}). Given some norm $\| \cdot \|$,  an initialization $w_0$ and $\beta > 0$, FTRL sets each $w_t$ according to
\begin{equation}\label{eq:mg}
w_{t+1} = \argmin_{w \in \cW} \left(\sum_{s=1}^t \alpha_s \inp{\nabla h_s(w_s)}{w} + \frac{1}{2 \beta} \| w \|^2 \right).
\end{equation}
If $\| \cdot \|$ is the Euclidean norm, the interior solution to \Cref{eq:mg} is given by $w_{t+1} = w_t - \alpha_t  \beta \nabla h_t (w_t)$, the meta-update in \Cref{alg:mg}. It is straightforward to extend \Cref{eq:mg} to account for meta-updates that use AdaGrad-like \citep{Duchi:2011adagrad} acceleration by altering the norms \citep{Joulani:2017modular}.

\paragraph{Update rule.} It is not possible to prove convergence outside of the convex setting, since $\varphi$ may reach a local minimum where it cannot yield better updates, but the updates are not sufficient to converge. Convexity means that each $h_t$ must be convex, which requires that $\varphi$ is affine in $w$ (but may vary non-linearly in $x$). We also assume that $\varphi$ is smooth with respect to $\| \cdot \|$, in the sense that it has bounded norm; for all $x \in \cX$ and all $w \in \cW$ we assume that there exists $\lambda \in (0, \infty)$ for which
\begin{equation*}
\| D \varphi(x, w)^T \nabla f(x) \|^2_{*} \leq \lambda \| \nabla f(x) \|^2_{*}.
\end{equation*}
These assumptions hold for any smooth update rule up to first-order Taylor approximation error. 

\begin{figure}
    \centering
    \includegraphics[width=\linewidth]{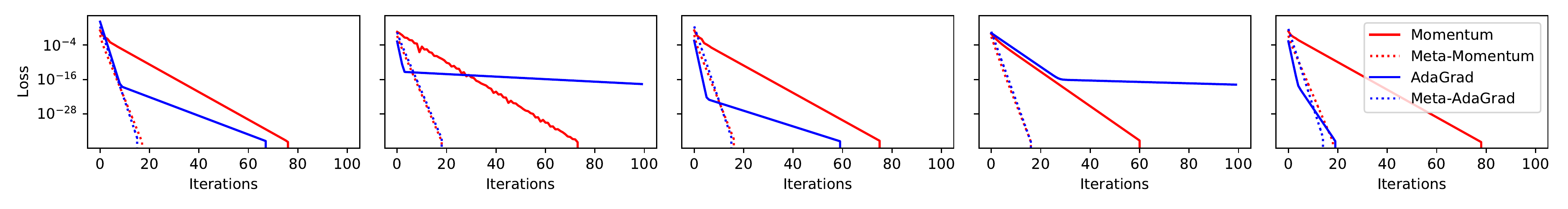}
    \includegraphics[width=.97\linewidth]{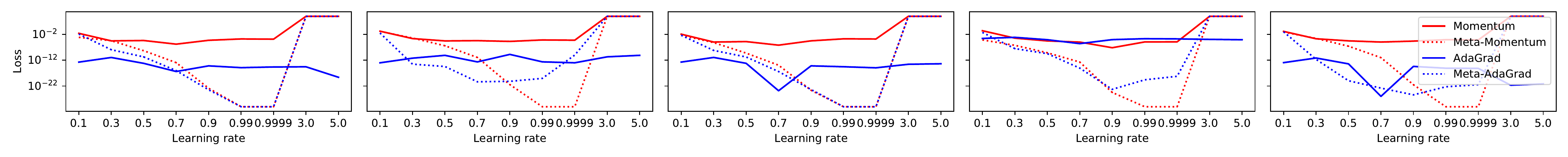}
    \caption{Convex Quadratic. We generate convex quadratic loss functions with ill-conditioning and compare gradient descent with momentum and AdaGrad to meta-learning variants. Meta-Momentum uses $\varphi: (x, w) \mapsto w \odot \nabla f(x)$ while Meta-AdaGrad uses $\varphi: (x, w) \mapsto \nabla f(x) / \sqrt{w}$, where division is element-wise. \emph{Top:} loss per iteration for randomly sampled loss functions. \emph{Bottom:} cumulative loss (regret) at the end of learning as a function of learning rate; details in \Cref{app:experiments}.}
    \label{fig:conv-2d}
\end{figure}

\section{Meta-Gradients in the Convex Setting - An Overview}\label{sec:res}

In this section, we provide an informal discussion of our main results (full analysis; \Cref{sec:main,sec:bmg}). 

\paragraph{Meta-Gradients without Optimism.} The main difference between classical optimisation and meta-learning is the introduction of the update rule $\varphi$. To see how this acts on optimisation, consider two special cases. If the update rule just return the gradient, $\varphi = \nabla f$, \Cref{alg:model} is reduced to gradient descent (with averaging). The inductive bias is fixed and does not change with past experience, and so acceleration is not possible---the rate of convergence is $O(1/\sqrt{T})$ \citep{Wang:2021FenchelGame}. The other extreme is an update rule that only depends on the meta-parameters, $\varphi(x, w) = w$. Here, the meta-learner has ultimate control and selects the next update without constraints. The only relevant inductive bias is contained in $w$. To see how this inductive bias is formed, suppose $\| \cdot \| = \| \cdot \|_2$ so that \Cref{eq:mg} yields $w_{t+1} = w_t - \alpha_t \rho_t \beta \nabla f(\bar{x}_t)$ (assuming an interior solution). Combining this with the moving average in \Cref{eq:avg}, we may write the learner's iterates as
\begin{equation*}
\bar{x}_{t} = \bar{x}_{t-1} + \tilde{\rho}_t \left( \bar{x}_{t-1} - \bar{x}_{t-2} \right) - \tilde{\beta}_t \nabla f(\bar{x}_{t-1}),
\end{equation*}
where each $\tilde{\rho_t} = \rho_t \frac{1-\rho_{t-1}}{\rho_{t-1}}$ and $\tilde{\beta}_{t} = \alpha_t \rho_t \beta$; setting $\beta = 1/(2L)$ and each $\alpha_t = t$ yields $\tilde{\rho}_t = \frac{t-2}{t+1}$ and $\tilde{\beta}_t = t/(4(t+1)L)$. Hence, the canonical momentum algorithm, Polyak's Heavy-Ball method \citep{Polyak:1964heavyball}, is obtained as the special case of meta-learning under the update rule $\varphi: (x, w) \mapsto w$. Because Heavy Ball carries momentum from past updates, it can encode a model of the learning dynamics that leads to faster convergence, on the order $O(1/T)$. The implication of this is that the dynamics of meta-learning are fundamentally momentum-based and thus learns an update rule in the same cumulative manner. This manifests theoretically through its convergence guarantees.

\begin{thm}[\emph{Informal}]\label{thm:mg-informal}
Set $\alpha_t = 1$ and $\beta = \frac{1}{\lambda L}$. If each $x_t$ is generated under \Cref{alg:model}, then for any viable $\varphi$, $f(\bar{x}_T) - f(x^*) \leq \frac{\lambda L \operatorname{diam}(\cW)}{T}$.
\end{thm}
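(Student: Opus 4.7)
The plan has three main ingredients: a Cutkosky-style anytime online-to-batch conversion, an affine reduction from an $\cX$-space gradient regret to an FTRL regret in the meta-parameter space $\cW$, and a smoothness-based control of the FTRL stability term.

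First I would establish the weighted online-to-batch inequality: using convexity of $f$ at each $\bar{x}_t$ together with the recursion $\alpha_{1:t}\bar{x}_t = \alpha_{1:t-1}\bar{x}_{t-1} + \alpha_t x_t$, a short induction yields
\[
\alpha_{1:T}\bigl(f(\bar{x}_T) - f(x^*)\bigr) \;\leq\; \sum_{t=1}^T \alpha_t \inp{\nabla f(\bar{x}_t)}{x_t - x^*}.
\]
Next, since $\varphi(\bar{x}_{t-1}, \cdot)$ is affine in $w$, the identity $x_t - \varphi(\bar{x}_{t-1}, u) = D\varphi(\bar{x}_{t-1}, w_t)(w_t - u)$ is exact for any $u \in \cW$, which gives $\inp{\nabla f(\bar{x}_t)}{x_t - \varphi(\bar{x}_{t-1}, u)} = \inp{g_t}{w_t - u}$. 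Adding and subtracting $\varphi(\bar{x}_{t-1}, u)$ in the RHS above re-expresses it as an FTRL regret plus a residual comparator term,
\[
\sum_t \alpha_t \inp{g_t}{w_t - u} \;+\; \sum_t \alpha_t \inp{\nabla f(\bar{x}_t)}{\varphi(\bar{x}_{t-1}, u) - x^*}.
\]

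Then I would apply the standard FTRL bound for linear losses with Euclidean regularizer, bounding the regret by $\tfrac{\|u - w_1\|^2}{2\beta} + \tfrac{\beta}{2}\sum_t \alpha_t^2 \|g_t\|_*^2$. The $\lambda$-smoothness assumption on the update rule, combined with the ``gradient-gap'' inequality $\|\nabla f(y)\|_*^2 \leq 2L(f(y) - f(x^*))$ valid for $L$-smooth convex $f$, controls the stability term via $\|g_t\|_*^2 \leq 2\lambda L(f(\bar{x}_t) - f(x^*))$. With $\alpha_t=1$ and $\beta = 1/(\lambda L)$, the product $\beta \alpha_t^2 \lambda L$ becomes exactly $1$, and the stability term reduces to $\sum_t (f(\bar{x}_t) - f(x^*))$.

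The main obstacle is closing the resulting recursion: this positive sum of per-step gaps must be dominated by the left-hand side $T(f(\bar{x}_T) - f(x^*))$. I would pick the comparator $u$ so that the residual inner-product term is controllable---for instance, when $\varphi$ admits a ``no-update'' element $u \in \cW$ with $\varphi(x, u) = x$, the residual becomes $\sum_t \alpha_t \inp{\nabla f(\bar{x}_t)}{\bar{x}_{t-1} - x^*}$, and a one-step descent-lemma argument in the style of \citep{Joulani:2020simpler} generates matching negative progress terms (of the form $-\tfrac{\alpha_{1:t-1}}{2L}\|\nabla f(\bar{x}_t)\|_*^2$) that telescope and absorb the positive $\sum_t (f(\bar{x}_t) - f(x^*))$. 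Combining with $\|u - w_1\|^2 \leq \operatorname{diam}(\cW)^2$ and dividing by $\alpha_{1:T}=T$ then yields the advertised $O\bigl(\lambda L\,\operatorname{diam}(\cW)^2 /T\bigr)$ rate.
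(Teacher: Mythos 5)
Your skeleton matches the paper's: an online-to-batch conversion, an affine reduction of the $x$-space regret to an FTRL regret over $\cW$, the FTRL bound, and the $\lambda$-smoothness of $\varphi$. But the step you yourself flag as ``the main obstacle'' is exactly where the proof lives, and the way you sketch closing it does not work.

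First, the cancellation. You convert the stability term via $\|g_t\|_*^2 \le \lambda\|\nabla f(\bar x_t)\|_*^2 \le 2\lambda L\bigl(f(\bar x_t)-f(x^*)\bigr)$ and then hope to absorb the resulting $\sum_t \bigl(f(\bar x_t)-f(x^*)\bigr)$ with negative gradient-norm terms of the form $-\tfrac{\alpha_{1:t-1}}{2L}\|\cdot\|_*^2$. The inequality $\tfrac{1}{2L}\|\nabla f(y)\|_*^2 \le f(y)-f(x^*)$ points the wrong way for this: a gradient-norm credit cannot pay for a function-gap debt, since the gradient can be arbitrarily small relative to the gap. The paper never converts to function gaps. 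It invokes the tightened online-to-batch bound of Joulani et al.\ (\Cref{eq:ub}) from the outset, which already carries the negative term $-\tfrac{\alpha_t}{2L}\|\nabla f(\bar x_t)-\nabla f(x^*)\|_*^2$; keeping the FTRL stability term in the form $\tfrac{\lambda\beta\alpha_t^2}{2}\|\nabla f(\bar x_t)\|_*^2$ and using $\nabla f(x^*)=0$, the choice $\alpha_t=1$, $\beta=1/(\lambda L)$ makes the positive and negative terms cancel exactly, term by term. The other negative term, $-\tfrac{\alpha_{1:t-1}}{2L}\|\nabla f(\bar x_{t-1})-\nabla f(\bar x_t)\|_*^2$ --- the one you reach for --- is simply dropped in this theorem; it only becomes load-bearing in the optimistic case (\Cref{thm:omg}).

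Second, the comparator. A ``no-update'' element $u$ with $\varphi(x,u)=x$ makes the residual $\sum_t \alpha_t\inp{\nabla f(\bar x_t)}{\bar x_{t-1}-x^*}$, which by convexity is bounded \emph{below} by roughly $\sum_t\alpha_t\bigl(f(\bar x_{t-1})-f(x^*)\bigr)$ --- a large positive quantity, so this choice makes the bound worse, not better. The paper instead assumes $\varphi$ \emph{preserves regret} (\Cref{defn:comp}): there exists $w^*\in\cW$ with $\sum_t\alpha_t\inp{\varphi(\bar x_{t-1},w^*)}{\nabla f(\bar x_t)} \le \sum_t\alpha_t\inp{x^*}{\nabla f(\bar x_t)}$, which makes the residual nonpositive by construction (\Cref{lem:reduc}). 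This hypothesis is what ``viable $\varphi$'' means in the statement and cannot be replaced by a stay-put comparator.
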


We refer the reader to \Cref{thm:mg} for a formal statement.
Compared to Heavy Ball, meta-learning introduces a constant $\lambda$ that captures the smoothness of the update rule. Hence, while meta-learning does not achieve better scaling in $T$ through $\varphi$, it can improve upon classical optimisation by a constant factor if $\lambda < 1$. That meta-learning can improve upon momentum is borne out experimentally. In \Cref{fig:conv-2d}, we consider the problem of minimizing a convex quadratic $f: x \mapsto \inp{x}{Q x}$, where $Q \in \rR^{n \times n}$ is PSD but ill-conditioned. We compare momentum to a meta-learned step-size, i.e. $\varphi: (x, w) \mapsto w \odot \nabla f(x)$, where $\odot$ is the Hadamard product. Across randomly sampled $Q$ matrices (details: \Cref{app:experiments}), we find that introducing a non-linearity $\varphi$ leads to a sizeable improvement in the rate of convergence. We also compare AdaGrad to a meta-learned version, $\varphi: (x, w) \mapsto \nabla f(x) / \sqrt{w}$, where division is element-wise. While AdaGrad is a stronger baseline on account of being parameter-free, we find that meta-learning the scale vector consistently leads to faster convergence.

\paragraph{Meta-Gradients with Optimism.} It is well known that minimizing a smooth convex function admits convergence rates of $O(1/T^2)$. Our analysis of standard meta-gradients does not achieve such acceleration. Previous work indicate that we should not expect to either; to achieve the theoretical lower-limit of $O(1/T^2)$, some form of \emph{optimism} (reviewed in \Cref{sec:bg}) is required. A typical form of optimism is to predict the next gradient. This is how Nesterov Acceleration operates \citep{nesterov1983method} and is the reason for its $O(1/T^2)$ convergence guarantee. 

From our perspective, meta-learning is a non-linear transformation of the iterate $x$. Hence, we should expect optimism to play a similarly crucial role. Formally, optimism comes in the form of \emph{hint functions} $\{ \tilde{g}_{t} \}_{t=1}^T$, each $\tilde{g}_t \in \rR^m$, that are revealed to the meta-learner prior to selecting $w_{t+1}$. These hints give rise to \emph{Optimistic Meta-Learning} (OML) via meta-updates
\begin{equation}\label{eq2:omg}
w_{t+1} = \argmin_{w \in \cW} \left(\alpha_{t+1} \tilde{g}_{t+1} + \sum_{s=1}^t \alpha_s \inp{\nabla h_s(w_s)}{w} + \frac{1}{2 \beta_t} \| w \|^2 \right).
\end{equation}

If the hints are accurate, meta-learning with optimism can achieve an accelerated rate of $O(\tilde{\lambda}/T^2)$, where $\tilde{\lambda}$ is a constant that characterises the smoothness of $\varphi$, akin to $\lambda$. Again, we find that meta-learning behaves as a non-linear transformation of classical optimism and its rate of convergence is governed by the geometry it induces. We summarise this result in the following result.

\begin{thm}[Informal]\label{thm:omg-main}
Let each hint be given by $\tilde{g}_{t+1} = D \varphi(\bar{x}_{t-1}, w_t)^T \nabla f(\bar{x}_{t})$. Assume that $\varphi$ is sufficiently smooth. Set $\alpha_t = t$ and $\beta_t = \frac{t-1}{2t \tilde{\lambda} L}$, then $f(\bar{x}_T) - f(x^*) \leq \frac{4 \tilde{\lambda} L \operatorname{diam}(\cW)}{T^2 - 1}$.
\end{thm}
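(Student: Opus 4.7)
I would adapt the accelerated online-to-batch framework used for obtaining Nesterov-type rates from optimistic online learning. The three ingredients to assemble are: (i) a weighted online-to-batch inequality turning $\alpha_{1:T}(f(\bar x_T)-f(x^*))$ into a sum of linearised meta-terms, up to a non-negative slack generated by $L$-smoothness of $f$; (ii) the affine-in-$w$ structure of $\varphi$, which rewrites inner products in primal space as inner products in meta-space; and (iii) an optimistic FTRL regret bound whose hint-error term is controlled by the smoothness assumption on $\varphi$ encoded by $\tilde\lambda$. The hint $\tilde g_{t+1}$ is chosen to approximate $g_{t+1}$ using information available at time $t$, so the error $g_t-\tilde g_t$ will be small whenever $\bar x_t$ and $\bar x_{t-1}$ (as well as the corresponding Jacobians) are close.

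\textbf{Key steps.} First, for any comparator $w^* \in \cW$, the affine-in-$w$ property of $\varphi$ gives the identity $\inp{\nabla f(\bar x_t)}{x_t-\varphi(\bar x_{t-1},w^*)} = \inp{g_t}{w_t-w^*}$, which is what lets a purely meta-level regret bound yield progress on $f$. Second, combining $L$-smoothness and convexity of $f$ with the weighted average $\bar x_t$ yields an online-to-batch inequality of the form
\begin{equation*}
\alpha_{1:T}\bigl(f(\bar x_T) - f(x^*)\bigr) \;\leq\; \sum_{t=1}^T \alpha_t \inp{g_t}{w_t - w^*} \;-\; S_T,
\end{equation*}
where $S_T \geq 0$ aggregates $\tfrac{1}{2L}\|\nabla f(\bar x_t)\|_*^2$-type terms; this is the gadget that must cancel the optimism error. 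Third, the optimistic FTRL regret bound gives
\begin{equation*}
\sum_{t=1}^T \alpha_t \inp{g_t}{w_t - w^*} \;\leq\; \frac{\|w^*\|^2}{2\beta_T} + \sum_{t=1}^T \Bigl(\alpha_t^2 \beta_t \|g_t - \tilde g_t\|_*^2 - \tfrac{1}{4\beta_t}\|w_{t+1}-w_t\|^2\Bigr).
\end{equation*}
Fourth, the smoothness of $\varphi$ combined with $L$-smoothness of $\nabla f$ yields a bound of the form $\|g_t-\tilde g_t\|_*^2 \leq \tilde\lambda L^2 \|\bar x_t - \bar x_{t-1}\|^2$, and because $\bar x_t - \bar x_{t-1} = \rho_t(x_t-\bar x_{t-1})$ with $\rho_t = \alpha_t/\alpha_{1:t}$, this error shrinks rapidly for $\alpha_t = t$.

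\textbf{Calibration and main obstacle.} With $\alpha_t = t$ and $\beta_t = (t-1)/(2t\tilde\lambda L)$ one has $\alpha_t^2 \beta_t = t(t-1)/(2\tilde\lambda L)$ and $\rho_t^2 = 4/(t+1)^2$, so the per-step hint-error term reduces to a multiple of $L\|x_t-\bar x_{t-1}\|^2$ which is exactly what the combined slack $S_T$ plus the proximal term $\|w_{t+1}-w_t\|^2/(4\beta_t)$ can absorb. After telescoping, only the bias $\|w^*\|^2/(2\beta_T) \leq \tilde\lambda L\operatorname{diam}(\cW)\cdot T/(T-1)$ survives; dividing by $\alpha_{1:T} = T(T+1)/2$ delivers the advertised rate $4\tilde\lambda L\operatorname{diam}(\cW)/(T^2-1)$. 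The main obstacle is making the three-way cancellation between the hint error, the proximal stability term, and the smoothness slack $S_T$ tight in constants rather than just in order; the specific calibration $\beta_t \propto 1/(\tilde\lambda L)$ is forced by this matching, and the appearance of $\tilde\lambda$ in place of a purely $L$-dependent constant reflects that the meta-learner operates on a non-linear image of the primal geometry rather than directly on $f$.
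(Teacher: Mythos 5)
Your overall architecture matches the paper's: the anytime online-to-batch bound (\Cref{eq:ub}), a reduction of primal regret to meta-regret via the affine structure of $\varphi$, and the AO-FTRL regret bound with hint error $\|g_t-\tilde{g}_t\|_{*}^2$, calibrated so that only the $\|w^*\|^2/(\beta_T\alpha_{1:T})$ term survives. One minor omission first: your identity $\inp{\nabla f(\bar{x}_t)}{x_t-\varphi(\bar{x}_{t-1},w^*)}=\inp{g_t}{w_t-w^*}$ compares $x_t$ against $\varphi(\bar{x}_{t-1},w^*)$, whereas the online-to-batch inequality compares against $x^*$; bridging the two requires a comparator $w^*$ with $\sum_{t=1}^T\alpha_t\inp{\nabla f(\bar{x}_t)}{\varphi(\bar{x}_{t-1},w^*)-x^*}\leq 0$ (the ``preserves regret'' condition of \Cref{defn:comp}), which you use implicitly but never state.

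The genuine gap is in your fourth step. You bound the hint error as $\|g_t-\tilde{g}_t\|_{*}^2\leq\tilde{\lambda}L^2\|\bar{x}_t-\bar{x}_{t-1}\|^2$ and claim the slack $S_T$ plus the FTRL stability term absorbs the resulting $O(L\|x_t-\bar{x}_{t-1}\|^2)$ per-step contribution. But the negative slack supplied by \Cref{eq:ub} is $-\frac{\alpha_{1:t-1}}{2L}\|\nabla f(\bar{x}_{t-1})-\nabla f(\bar{x}_t)\|_{*}^2$, a \emph{gradient} difference, and smoothness only gives $\|\nabla f(\bar{x}_{t-1})-\nabla f(\bar{x}_t)\|_{*}\leq L\|\bar{x}_{t-1}-\bar{x}_t\|$ --- the wrong direction: without strong convexity the gradient difference can be arbitrarily smaller than $L\|\bar{x}_t-\bar{x}_{t-1}\|$, so the iterate-difference term is not dominated by this slack. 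The stability term $-\frac{1}{4\beta_t}\|w_{t+1}-w_t\|^2$ lives in $\cW$ and cannot absorb a primal-space quantity without relating the two geometries. The paper sidesteps all of this by defining the smoothness constant of $\varphi$ directly in terms of gradient differences, $\|D\varphi(x',w)^T\nabla f(x)-D\varphi(x'',w')^T\nabla f(x')\|_{*}^2\leq\tilde{\lambda}\|\nabla f(x')-\nabla f(x)\|_{*}^2$ (\Cref{cor:omg}), so that with $\alpha_t=t$ and $\beta_t=\frac{t-1}{2t\tilde{\lambda}L}$ the positive term satisfies $\tilde{\lambda}\alpha_t^2\beta_t=\alpha_{1:t-1}/L$ and cancels the slack exactly. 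Your calibration arithmetic and final rate are correct once the hint error is measured against gradient differences; the detour through $\|\bar{x}_t-\bar{x}_{t-1}\|^2$ is what breaks the cancellation.
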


For a formal statement, see \Cref{thm:omg}. These predictions hold empirically in a non-convex setting. We train a 50-layer ResNet using either SGD with a fixed learning rate, or an update rule that adapts a per-parameter learning rate online, $\varphi: (x, w) \mapsto w \odot \nabla f(x)$. We compare the standard meta-learning approach without optimism to optimistic meta-learning. \Cref{fig:imagenet} shows that optimism is critical for meta-learning to achieve acceleration, as predicted by theory (experiment details in \Cref{app:imagenet}). 

\section{Analysis preliminaries: Online Convex Optimisation}\label{sec:bg}

In this section, we present analytical tools from the optimisation literature that we build upon. In a standard optimisation setting, there is no update rule $\varphi$; instead, the iterates $x_t$ are generated by a gradient-based algorithm, akin to \Cref{eq:mg}. In particular, our setting reduces to standard optimisation if $\varphi$ is defined by $\varphi: (x, w) \mapsto w$, in which case $x_t = w_t$. A common approach to analysis is to treat the iterates $x_1, x_2, \ldots$ as generated by an online learning algorithm over online losses, obtain a regret guarantee for the sequence, and use online-to-batch conversion to obtain a rate of convergence. 

\paragraph{Online Optimisation.} In online convex optimisation \citep{Zinkevich:2003online}, a learner is given a convex decision set $\cU$ and faces a sequence of convex loss functions $\{\alpha_t f_t\}_{t=1}^T$. At each time $t \in [T]$, it must make a prediction $u_t$ prior to observing $\alpha_t f_t$, after which it incurs a loss $\alpha_t f_t(u_t)$ and receives a signal---either $\alpha_t f_t$ itself or a (sub-)gradient of $\alpha_t f_t(u_t)$. The learner's goal is to minimise \emph{regret}, $R(T) \coloneqq \sum_{t=1}^T \alpha_t (f_t(u_t) - f_{t}(u))$, against a comparator $u \in \cU$. An important property of a convex function $f$ is $f(u') - f(u) \leq \inp{\nabla f(u')}{u' - u}$. Hence, the regret is largest under linear losses: $\sum_{t=1}^T \alpha_t (f_t(u_t) - f_{t}(u)) \leq \sum_{t=1}^T \alpha_t \inp{\nabla f_t(u_t)}{u_t - u}$. For this reason, it is sufficient to consider regret under linear loss functions. An algorithm has sublinear regret if $\lim_{T \to \infty} R(T) / T  = 0$.

\paragraph{FTRL \& AO-FTRL.} The meta-update in \Cref{eq:mg} is an instance of Follow-The-Regularised-Leader (FTRL) under linear losses. In \Cref{sec:bmg}, we show that BMG is an instance of the Adaptive-Optimistic FTRL (AO-FTRL), which is an extension due to \citep{Rakhlin:2013optimism,Mohri:2016aoftrl,Joulani:2020simpler,Wang:2021FenchelGame}. In AO-FTRL, we have a strongly convex regulariser $\| \cdot \|^2$. FTRL and AO-FTRL sets the first prediction $u_1$ to minimise $\| \cdot \|^2$. Given linear losses $\{g_s\}_{s=1}^{t-1}$ and learning rates $\{\beta_t\}_{t=1}^T$, each $\beta_{t} > 0$, the algorithm proceeds according to
\begin{equation}\label{eq:ftrl}
u_{t} = \argmin_{u \in \cU} \left( \alpha_t \inp{\tilde{g}_t}{u} + \sum_{s=1}^{t-1} \alpha_s \inp{g_s}{u} + \frac{1}{2 \beta_t} \| u \|^2 \right),
\end{equation}
where each $\tilde{g}_t$ is a ``hint'' that enables optimistic learning \citep{Rakhlin:2013optimism,Mohri:2016aoftrl}; setting $\tilde{g}_t =0$ recovers the original FTRL algorithm. The goal of a hint is to predict the next loss vector $g_t$; if the predictions are accurate AO-FTRL can achieve lower regret than its non-optimistic counter-part. Since $\| \cdot \|^2$ is strongly convex, FTRL is well defined in the sense that the minimiser exists, is unique and finite \citep{Mcmahan2017survey}. The regret of FTRL and AO-FTRL against any comparator $u \in \cU$ can be upper-bounded by
\begin{equation}\label{eq:ftrl-regret}
R(T) = \sum_{t=1}^T \alpha_t \inp{g_t}{u_t - u} \leq \frac{\|u\|^2}{2 \beta_T} + \frac12 \sum_{t=1}^T \alpha_t^2 \beta_t \left\| g_t - \tilde{g}_t \right\|_{*}^2.
\end{equation}
Hence, hints that predict $g_t$ well can reduce the regret substantially. Without hints, FTRL can guarantee $O(\sqrt{T})$ regret (for non strongly convex loss functions). However, \citet{Dekel:2017hints} show that under linear losses, if hints are weakly positively correlated---defined as $\inp{g_t}{\tilde{g}_t} \geq \epsilon \| g_t \|^2$ for some $\epsilon > 0$---then the regret guarantee improves to $O(\log T)$, even for non strongly-convex loss functions. We believe optimism provides an exciting opportunity for novel forms of meta-learning. Finally, we note that these regret bounds (and hence our analysis) can be extended to stochastic optimisation \citep{Mohri:2016aoftrl,Joulani:2017modular}.

\paragraph{Online-to-batch conversion.} The main idea behind online to batch conversion is that, for $f$ convex, Jensen's inequality gives $f(\bar{x}_T) - f(x^*) \leq \sum_{t=1}^T \alpha_t \inp{\nabla f(x_t)}{x_t - x^*} / \alpha_{1:T}$. Hence, one can provide a convergence rate by first establishing the regret of the algorithm that generates $x_t$, from which one obtains the convergence rate of the moving average of iterates. Applying this naively yields $O(1/T)$ rate of convergence. In recent work, \citet{Cutkosky:2019anytime} shows that one can upper-bound the sub-optimality gap by instead querying the gradient gradient at the average iterate, $f(\bar{x}_T) - f(x^*) \leq \sum_{t=1}^T \alpha_t \inp{\nabla f(\bar{x}_t)}{x_t - x^*} / \alpha_{1:T}$, which can yield faster rates of convergence. Recently, \citet{Joulani:2020simpler} tightened the analysis and proved that the sub-optimality gap can be bounded by
%
%
%
\begin{equation}\label{eq:ub}
\begin{aligned}
&f(\bar{x}_T) - f(x^*) \leq \\
& \frac{1}{\alpha_{1:T}} \left(R^x(T) - \frac{\alpha_{t}}{2L} \| \nabla f(\bar{x}_t) - \nabla f(x^*) \|_{*}^2 - \frac{\alpha_{1:t-1}}{2L} \| \nabla f(\bar{x}_{t-1}) - \nabla f(\bar{x}_t) \|_{*}^2
\right),
\end{aligned}
\end{equation}
were we define $R^x(T) \coloneqq \sum_{t=1}^T \alpha_t \inp{\nabla f(\bar{x}_t)}{x_t - x^*}$ as the regret of the sequence $\{x_t\}_{t=1}^T$ against the comparator $x^*$. With this machinery in place, we now turn to deriving our main results.

\section{Analysis}\label{sec:main}

Our analytical goal is to apply the online-to-batch conversion bound in \Cref{eq:ub} to the iterates $x_1, x_2, \ldots, x_T$ that \Cref{alg:model} generates. Our main challenge is that the update rule $\varphi$ prevents a straightforward application of this bound. Instead, we must upper bound the learner's regret $R^x$ by the meta-learner's regret, which is defined in terms of the iterates $w_1, w_2, \ldots, w_T$. To this end, we may decompose $R^x$ as follows:
\begin{align*}
R^x(T) &= 
\sum_{t=1}^T \alpha_t \inp{\nabla f(\bar{x}_t)}{x_t - x^*} 
= \sum_{t=1}^T \alpha_t \inp{\nabla f(\bar{x}_t)}{\varphi(\bar{x}_{t-1}, w_t) - x^*} \\
&= \sum_{t=1}^T \alpha_t \inp{\nabla f(\bar{x}_t)}{\varphi(\bar{x}_{t-1}, w_t) - \varphi(\bar{x}_{t-1}, w^*)} + \sum_{t=1}^T \alpha_t \inp{\nabla f(\bar{x}_t)}{\varphi(\bar{x}_{t-1}, w^*) - x^*}.
\end{align*}
The first term in the final expression can be understood as the regret under convex losses $\ell_t(\cdot) =  \alpha_t \inp{\nabla f(\bar{x}_t)}{\varphi(\bar{x}_{t-1}, \cdot)}$. Since $\varphi(\bar{x}_{t-1}, \cdot)$ is affine, $\ell_t$ is convex and can be upper bounded by its linearisation.  The linearisation reads $\inp{D \varphi(\bar{x}_{t-1}, w_t)^T \nabla f(\bar{x}_t)}{\cdot}$, which is identical the linear losses $\inp{\nabla h_t(w_t)}{\cdot}$ faced by the meta-learner in \Cref{eq:mg}. Hence, we may upper bound $R^x(T)$ by
\begin{align}
R^x(T)
&\leq
\sum_{t=1}^T \alpha_t \inp{D \varphi(\bar{x}_{t-1}, w_t)^T \nabla f(\bar{x}_t)}{w_t - w^*} + \sum_{t=1}^T \alpha_t \inp{\nabla f(\bar{x}_t)}{\varphi(\bar{x}_{t-1}, w^*) - x^*} \nonumber\\
&=
\sum_{t=1}^T \alpha_t \inp{\nabla h_t(w_t)}{w_t - w^*} + \sum_{t=1}^T \alpha_t \inp{\nabla f(\bar{x}_t)}{\varphi(\bar{x}_{t-1}, w^*) - x^*} \nonumber \\
&=
R^w(T) + \sum_{t=1}^T \alpha_t \inp{\nabla f(\bar{x}_t)}{\varphi(\bar{x}_{t-1}, w^*) - x^*}, \label{eq:expansion}
\end{align}
where the last identity follows by definition: $R^w(T) \coloneqq \sum_{t=1}^T \alpha_t \inp{\nabla h_t(w_t)}{w_t - w^*} $. For the last term in \Cref{eq:expansion} to be negative, so that $R^w(T) \geq R^x(T)$, we need the relative power of the comparator $w^*$ to be greater than that of the comparator $x^*$. Intuitively, the comparator $x^*$ is non-adaptive. It must make one choice $x^*$ and suffer the average loss. In contrast, the comparator $w^*$ becomes adaptive under the update rule; it can only choose one $w^*$, but on each round it plays $\varphi(\bar{x}_{t-1}, w^*)$. If $\varphi$ is sufficiently flexible, this gives the comparator $w^*$ more power than $x^*$, and hence it can force the meta-learner to suffer greater regret. When this is the case, we say that regret is \emph{preserved} when moving from $x^*$ to $w^*$.

\begin{defn}\label{defn:comp}
Given $f$, $\{\alpha_t\}_{t=1}^T$, and $\{x_t\}_{t=1}^T$, an update rule $\varphi: \cX \times \cW \to \cX$ \emph{preserves regret} if there exists a comparator $w \in \cW$ that satisfies
\begin{equation}\label{eq:comp}
\sum_{t=1}^T \alpha_t \inp{\varphi(\bar{x}_{t-1}, w)}{\nabla f(\bar{x}_t)} \leq \sum_{t=1}^T \alpha_t \inp{x^*}{\nabla f(\bar{x}_t)}.
\end{equation}
If such $w$ exists, let $w^*$ denote the comparator with smallest norm $\| w\|$.
\end{defn}

By inspecting \Cref{eq:comp}, we see that if $\varphi(\bar{x}_{t-1}, \cdot)$ can be made to negatively align with the gradient $\nabla f(\bar{x}_t)$, the update rule preserves regret. Hence, any update rule that is gradient-like in its behaviour can be made to preserve regret. However, this must not hold on every step, only sufficiently often; nor does it imply that the update rule must explicitly invoke $\nabla f$; for instance, update rules that are affine in $w$ preserve regret if the diameter of $\cW$ is sufficiently large, provided the update rule is not degenerate.

\begin{lem}\label{lem:reduc}
Given $f$, $\{\alpha_t\}_{t=1}^T$, and $\{x_t\}_{t=1}^T$, if $\varphi$ preserves regret, then
\begin{equation*}
R^x(T) = \sum_{t=1}^T \alpha_t \inp{\nabla f(\bar{x}_t)}{x_t - x^*} \leq \sum_{t=1}^T \alpha_t \inp{\nabla f(\bar{x}_t)}{\varphi(\bar{x}_{t-1}, w_t) - \varphi(\bar{x}_{t-1}, w^*)} = R^w(T).
\end{equation*}
\end{lem}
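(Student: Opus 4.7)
The plan is to follow essentially the telescoping decomposition already laid out in Eq.~\eqref{eq:expansion}, but interpret the inequality through the regret-preservation property of Definition~\ref{defn:comp}, and use affinity of $\varphi$ in $w$ to convert everything into the meta-learner's regret.

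First I would substitute $x_t = \varphi(\bar{x}_{t-1}, w_t)$ into the definition of $R^x(T)$, then add and subtract the term $\alpha_t \inp{\nabla f(\bar{x}_t)}{\varphi(\bar{x}_{t-1}, w^*)}$ inside the sum to split it as
\begin{equation*}
R^x(T) = \underbrace{\sum_{t=1}^T \alpha_t \inp{\nabla f(\bar{x}_t)}{\varphi(\bar{x}_{t-1}, w_t) - \varphi(\bar{x}_{t-1}, w^*)}}_{(\text{A})} + \underbrace{\sum_{t=1}^T \alpha_t \inp{\nabla f(\bar{x}_t)}{\varphi(\bar{x}_{t-1}, w^*) - x^*}}_{(\text{B})}.
\end{equation*}

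Next I would invoke the hypothesis: since $\varphi$ preserves regret and $w^*$ is (by Definition~\ref{defn:comp}) the comparator realising the inequality \eqref{eq:comp}, rearrangement of that inequality immediately yields $(\text{B}) \leq 0$. Combining with the identity above gives $R^x(T) \leq (\text{A})$, which is exactly the middle expression in the lemma statement.

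For the final equality $(\text{A}) = R^w(T)$, I would use the assumption from Section~\ref{sec:setup} that $\varphi(\bar{x}_{t-1}, \cdot)$ is affine in $w$. Affinity makes the first-order Taylor expansion exact, so $\varphi(\bar{x}_{t-1}, w_t) - \varphi(\bar{x}_{t-1}, w^*) = D\varphi(\bar{x}_{t-1}, w_t)(w_t - w^*)$, and transposing the Jacobian inside the inner product gives $\inp{D\varphi(\bar{x}_{t-1}, w_t)^T \nabla f(\bar{x}_t)}{w_t - w^*} = \inp{\nabla h_t(w_t)}{w_t - w^*}$ by the chain rule applied to $h_t(w) = f(\bar{x}_{t-1} + \rho_t \varphi(\bar{x}_{t-1}, w))$ at $w=w_t$ (absorbing the $\rho_t$ and the constant shift $\bar{x}_{t-1}$ into the weights $\alpha_t$ as done throughout). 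Summing reconstructs $R^w(T)$.

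The only non-routine step is recognising that regret preservation does exactly what is needed to discharge term (B); steps~1 and~3 are just bookkeeping using affinity and the chain rule. I do not anticipate a genuine technical obstacle, since all the conceptual work was already done in isolating Definition~\ref{defn:comp} and in imposing the affinity assumption on $\varphi$ in Section~\ref{sec:setup}.
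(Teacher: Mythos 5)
Your proposal is correct and follows essentially the same route as the paper's proof: substitute $x_t = \varphi(\bar{x}_{t-1}, w_t)$, add and subtract $\varphi(\bar{x}_{t-1}, w^*)$, and use \Cref{defn:comp} to conclude that the cross term is nonpositive. The only cosmetic difference is your step converting the remaining sum into $\sum_t \alpha_t \inp{\nabla h_t(w_t)}{w_t - w^*}$ via affinity and the chain rule; in the paper that final identity is purely definitional (the lemma and notation table define $R^w(T)$ as the sum over $\inp{\nabla f(\bar{x}_t)}{\varphi(\bar{x}_{t-1}, w_t) - \varphi(\bar{x}_{t-1}, w^*)}$), and the linearisation step you describe is deferred to the proof of \Cref{thm:mg}, where it appears as an inequality rather than an equality.
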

Proof: \Cref{app:proofs}. With \Cref{lem:reduc}, we can provide a convergence guarantee for meta-gradients in the convex setting. The mechanics of the proof is to use online-to-batch conversion to upper bound $f(\bar{x}_T) - f(x^*) \leq R^x(T) / \alpha_{1:T}$ and then appeal to \Cref{lem:reduc} to obtain $f(\bar{x}_T) - f(x^*) \leq R^w(T) / \alpha_{1:T}$, from which point we can plug in the FTRL regret bound.

\begin{thm}\label{thm:mg}
Let $\varphi$ preserve regret and assume \Cref{alg:model} satisfies the assumptions in \Cref{sec:setup}. Then
\begin{equation*}
\begin{aligned}
f(\bar{x}_T) - f(x^*) \leq & \frac{1}{\alpha_{1:T}} \left(\frac{\| w^* \|^2}{\beta} + \sum_{t=1}^T  \frac{\lambda\beta \alpha^2_t }{2}   \|\nabla f(\bar{x}_t) \|_{*}^2  \right.
\\
&\left. \vphantom{\sum_{t=1}^T} - \frac{\alpha_{t}}{2L} \| \nabla f(\bar{x}_t) - \nabla f(x^*) \|_{*}^2 - \frac{\alpha_{1:t-1}}{2L} \| \nabla f(\bar{x}_{t-1}) - \nabla f(\bar{x}_t) \|_{*}^2
\right).
\end{aligned}
\end{equation*}
Moreover, if $x^*$ is a global minimiser of $f$, setting $\alpha_t = 1$ and $\beta = \frac{1}{\lambda L}$ yields
\begin{equation*}
f(\bar{x}_T) - f(x^*) \leq \frac{\lambda L \operatorname{diam}(\cW)}{T}.
\end{equation*}
\end{thm}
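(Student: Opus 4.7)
The plan is to assemble the bound by chaining three ingredients that are already in place in the paper: the online-to-batch conversion of \Cref{eq:ub}, the regret reduction from the learner to the meta-learner in \Cref{lem:reduc}, and the FTRL regret bound of \Cref{eq:ftrl-regret}. Concretely, first I would invoke \Cref{eq:ub} applied to the iterates $\{x_t\}_{t=1}^T$ produced by \Cref{alg:model}, so that the sub-optimality gap is controlled by the learner's regret $R^x(T)$ minus two non-positive ``smoothness'' residuals that come from the Lipschitz-gradient assumption on $f$. Since $\varphi$ preserves regret by hypothesis, \Cref{lem:reduc} immediately replaces $R^x(T)$ by $R^w(T) = \sum_{t=1}^T \alpha_t \inp{\nabla h_t(w_t)}{w_t - w^*}$, i.e.\ the regret of the meta-learner against the minimum-norm preserving comparator $w^*$.

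Next I would observe that, per \Cref{eq:mg}, the meta-updates are FTRL under linear losses $g_t = \nabla h_t(w_t) = D\varphi(\bar{x}_{t-1}, w_t)^T \nabla f(\bar{x}_t)$, regulariser $\tfrac{1}{2\beta}\|\cdot\|^2$, and no hints ($\tilde g_t = 0$). Plugging this into \Cref{eq:ftrl-regret} gives
\begin{equation*}
R^w(T) \;\leq\; \frac{\|w^*\|^2}{2\beta} + \frac{\beta}{2}\sum_{t=1}^T \alpha_t^2 \,\|\nabla h_t(w_t)\|_*^2.
\end{equation*}
The smoothness assumption on $\varphi$ then controls each $\|\nabla h_t(w_t)\|_*^2 = \|D\varphi(\bar{x}_{t-1}, w_t)^T \nabla f(\bar{x}_t)\|_*^2 \leq \lambda \|\nabla f(\bar{x}_t)\|_*^2$. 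Combining with the online-to-batch bound yields the first displayed inequality of the theorem (up to the factor of two in the $\|w^*\|^2/\beta$ term, which is absorbed by simple constant bookkeeping).

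For the concrete rate, substituting $\alpha_t = 1$ and $\beta = 1/(\lambda L)$ makes the coefficient of $\|\nabla f(\bar{x}_t)\|_*^2$ in the positive FTRL term equal to $\tfrac{1}{2L}$. Because $x^*$ is a global minimiser of $f$ on $\cX$, the term $\|\nabla f(\bar{x}_t) - \nabla f(x^*)\|_*^2 = \|\nabla f(\bar{x}_t)\|_*^2$, so the positive FTRL sum cancels with the first negative online-to-batch residual term-by-term. The remaining negative term $-\tfrac{\alpha_{1:t-1}}{2L}\|\nabla f(\bar{x}_{t-1}) - \nabla f(\bar{x}_t)\|_*^2$ can be discarded, and bounding $\|w^*\|^2 \leq \operatorname{diam}(\cW)$ (with the diameter defined so that this inequality holds, as used in the theorem) leaves $f(\bar{x}_T) - f(x^*) \leq \lambda L \operatorname{diam}(\cW)/T$.

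The only non-mechanical step is verifying that the smoothness assumption on $\varphi$, stated for $\nabla f$ evaluated at the same point as $D\varphi$, is actually applied to $\nabla f(\bar{x}_t)$ while $D\varphi$ is evaluated at $\bar{x}_{t-1}$; I would interpret the assumption in the slightly stronger operator-norm sense that the argument evidently requires, which is standard in this literature. Beyond that, the proof is a direct cascade: online-to-batch $\Rightarrow$ \Cref{lem:reduc} $\Rightarrow$ FTRL bound $\Rightarrow$ $\varphi$-smoothness $\Rightarrow$ choose $\alpha_t, \beta$ so the two $\|\nabla f(\bar{x}_t)\|_*^2$ contributions cancel, with no new analytic machinery needed.
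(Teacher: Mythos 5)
Your proposal is correct and follows essentially the same route as the paper's own proof: online-to-batch conversion via \Cref{eq:ub}, reduction of $R^x(T)$ to $R^w(T)$ via \Cref{lem:reduc}, the FTRL regret bound of \Cref{eq:ftrl-regret} with $\tilde g_t = 0$, the smoothness bound on $\|D\varphi(\bar{x}_{t-1},w_t)^T\nabla f(\bar{x}_t)\|_*^2$, and the same cancellation under $\alpha_t=1$, $\beta = 1/(\lambda L)$ using $\nabla f(x^*)=0$. Your two side remarks---that \Cref{eq:ftrl-regret} actually yields the tighter constant $\|w^*\|^2/(2\beta)$, and that the smoothness assumption on $\varphi$ is applied with $D\varphi$ at $\bar{x}_{t-1}$ but $\nabla f$ at $\bar{x}_t$ and so must be read in the operator-norm sense---are both accurate observations about looseness the paper's own proof also glosses over.
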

Proof: \Cref{app:proofs}.

\section{Meta-Learning meets Optimism}\label{sec:bmg}

The reason \Cref{thm:mg} fails to achieve acceleration is because the negative terms, $-\| \nabla f (\bar{x}_{t-1}) - \nabla f(\bar{x}_t)\|^2_{*}$, do not come into play. This is because the positive term in the bound involves the norm of the gradient, rather than the norm of the difference of two gradients. The former is typically a larger quantity and hence we cannot guarantee that they vanish. To obtain acceleration, we need some form of optimism. In this section, we consider an alteration to \Cref{alg:model} that uses AO-FTRL for the meta-updates. Given some sequence of hints $\{ \tilde{g}_{t} \}_{t=1}^T$, each $\tilde{g}_t \in \rR^m$, each $w_{t+1}$ is given by
\begin{equation}\label{eq:omg}
w_{t+1} = \argmin_{w \in \cW} \left(\alpha_{t+1} \tilde{g}_{t+1} + \sum_{s=1}^t \alpha_s \inp{\nabla h_s(w_s)}{w} + \frac{1}{2 \beta_t} \| w \|^2 \right).
\end{equation}

\begin{minipage}[t]{0.45\linewidth}
\begin{algorithm}[H]
\DontPrintSemicolon
\Indp
\SetKwInOut{Input}{input}
\SetKwFor{For}{for }{}{}
\Input{Weights $\{\beta_{t}\}^T_{t=1}$}
\Input{Update rule $\varphi$}
\Input{Target oracle}
\Input{Initialisation $(x_0, w_1)$}
\For{$t = 1, 2, \ldots, T$:}{
$x_{t} = x_{t-1} + \varphi(x_{t-1}, w_t)$ \\
Query $z_t$ from target oracle\\
$d_t(\cdot) = \| z_t - x_t +  \varphi(x_t, \cdot)\|^2$\\
$w_{t+1} = w_t - \beta_t \nabla d_t(w_t)$
}
\Return{$x_T$}
\;
\;
\caption{BMG in practice.}\label{alg:bmg}
\end{algorithm}
\end{minipage}
\hfill
\begin{minipage}[t]{0.54\textwidth}
\begin{algorithm}[H]
\DontPrintSemicolon
\Indp
\SetKwInOut{Input}{input}
\SetKwFor{For}{for }{}{}
\Input{Weights $\{\alpha_{t}\}^T_{t=1}, \{\beta_{t}\}^T_{t=1}$}
\Input{Update rule $\varphi$}
\Input{Hints $\{\tilde{g}_t\}^T_{t=1}$}
\Input{Initialisation $(\bar{x}_0, w_1)$}
\For{$t = 1, 2, \ldots, T$:}{
$x_t = \varphi(\bar{x}_{t-1}, w_t)$\\
$\bar{x}_t = (1-\alpha_t/\alpha_{1:t})\bar{x}_{t-1} + (\alpha_t/\alpha_{1:t}) x_t$\\
$g_t = D \varphi(\bar{x}_{t-1}, w_t)^T \nabla f(\bar{x}_t)$ \\
$v_t = \alpha_{t+1} \tilde{g}_{t+1} + \sum_{s=1}^t \alpha_s g_s $\\
$w_{t+1}  =  \arg\min_{w \in \cW} \inp{v_t}{w} +  \frac{1}{2\beta_t}\| w \|^2$
}
\Return{$\bar{x}_T$}
\caption{Convex optimistic meta-learning.}\label{alg:oml}
\end{algorithm}
\end{minipage}

Otherwise, we proceed as in \Cref{alg:model}; for a complete description, see \Cref{alg:oml}. The AO-FTRL updates do not correspond to a standard meta-update. However, we show momentarily that optimism can be instantiated via the BMG method, detailed in \Cref{alg:bmg}. The proof for optimistic meta-gradients proceed largely as in \Cref{thm:mg}, it only differs in that we apply the AO-FTRL regret bound.

\begin{thm}\label{thm:omg}
Let $\varphi$ preserve regret and assume \Cref{alg:oml} satisfy the assumptions in \Cref{sec:setup}. Then
\begin{equation*}
\begin{aligned}
f(\bar{x}_T) - f(x^*) \leq & \frac{1}{\alpha_{1:T}} \left(\frac{\| w^* \|^2}{\beta_T} + \sum_{t=1}^T  \frac{\alpha^2_t \beta_t}{2} \|D \varphi(\bar{x}_{t-1}, w_t)^T \nabla f(\bar{x}_t) - \tilde{g}_{t} \|_{*}^2  \right.
\\
&\left. \vphantom{\sum_{t=1}^T} - \frac{\alpha_{t}}{2L} \| \nabla f(\bar{x}_t) - \nabla f(x^*) \|_{*}^2 - \frac{\alpha_{1:t-1}}{2L} \| \nabla f(\bar{x}_{t-1}) - \nabla f(\bar{x}_t) \|_{*}^2
\right).
\end{aligned}
\end{equation*}
Moreover, assume each $\tilde{g}_t$ is such that $\|D \varphi(\bar{x}_{t-1}, w_t)^T \nabla f(\bar{x}_t) - \tilde{g}_{t} \|_{*}^2 \leq q \| \nabla f(\bar{x}_{t-1}) - \nabla f(\bar{x}_t) \|_{*}^2$ for some $q > 0$. If each $\alpha_t = t$ and $\beta_t = \frac{t-1}{2t q L}$, then 
\begin{equation*}
f(\bar{x}_t) - f(x^*) \leq \frac{4 q L \operatorname{diam}(\cW)}{T^2 - 1}.
\end{equation*}
\end{thm}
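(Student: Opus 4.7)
The plan is to prove Theorem \ref{thm:omg} by mirroring the proof of Theorem \ref{thm:mg}, substituting the AO-FTRL regret bound for the standard FTRL bound, and then choosing the schedule $(\alpha_t,\beta_t)$ so that the hint-error term telescopes against the smoothness-induced negative terms from the online-to-batch bound \Cref{eq:ub}. Concretely, I will (i) invoke \Cref{eq:ub} on the iterates $\{x_t\}$ produced by \Cref{alg:oml}; (ii) reduce $R^x(T)$ to $R^w(T)$ using \Cref{lem:reduc} (the update rule is regret-preserving by assumption); and (iii) apply the AO-FTRL regret bound from \Cref{eq:ftrl-regret} to $R^w(T)$ with linear losses $g_t=\nabla h_t(w_t)=D\varphi(\bar x_{t-1},w_t)^T\nabla f(\bar x_t)$, hints $\tilde g_t$, comparator $w^*$, and regulariser $\tfrac{1}{2\beta_t}\|\cdot\|^2$. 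Plugging the resulting bound on $R^w(T)$ back into \Cref{eq:ub} and dividing by $\alpha_{1:T}$ yields the first displayed inequality of the theorem directly.

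For the accelerated rate, the task reduces to choosing $\alpha_t$ and $\beta_t$ so that the positive hint-error term is dominated by the negative gradient-difference term, namely so that
\begin{equation*}
\tfrac{\alpha_t^2\beta_t}{2}\bigl\| D\varphi(\bar x_{t-1},w_t)^T\nabla f(\bar x_t)-\tilde g_t\bigr\|_*^2
\;\leq\;
\tfrac{\alpha_{1:t-1}}{2L}\,\bigl\|\nabla f(\bar x_{t-1})-\nabla f(\bar x_t)\bigr\|_*^2.
\end{equation*}
By the hint-quality assumption, the left-hand side is at most $\tfrac{\alpha_t^2\beta_t q}{2}\|\nabla f(\bar x_{t-1})-\nabla f(\bar x_t)\|_*^2$, so it suffices that $\alpha_t^2\beta_t q\leq \alpha_{1:t-1}/L$. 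With $\alpha_t=t$ we have $\alpha_{1:t-1}=t(t-1)/2$, and the choice $\beta_t=(t-1)/(2tqL)$ gives $\alpha_t^2\beta_t q=t(t-1)/(2L)=\alpha_{1:t-1}/L$, so the two terms cancel exactly round by round.

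After this cancellation, the remaining negative smoothness terms $-\tfrac{\alpha_t}{2L}\|\nabla f(\bar x_t)-\nabla f(x^*)\|_*^2$ are simply dropped, leaving only $\|w^*\|^2/\beta_T$. Using $\beta_T=(T-1)/(2TqL)$ and $\alpha_{1:T}=T(T+1)/2$, the bound becomes
\begin{equation*}
f(\bar x_T)-f(x^*)\;\leq\;\frac{1}{\alpha_{1:T}}\cdot\frac{\|w^*\|^2}{\beta_T}
\;=\;\frac{2}{T(T+1)}\cdot\frac{2TqL\,\|w^*\|^2}{T-1}
\;=\;\frac{4qL\,\|w^*\|^2}{T^2-1},
\end{equation*}
and upper-bounding $\|w^*\|^2\leq\operatorname{diam}(\cW)$ yields the claim.

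The main obstacle is the per-round calibration of $\beta_t$: the positive term scales like $\alpha_t^2\beta_t$ while the negative term scales like $\alpha_{1:t-1}/L$, so $\beta_t$ must decrease in $t$ in exactly the right way to preserve cancellation at every step without losing the $1/T^2$ scaling of the leading $\|w^*\|^2/\beta_T$ term. The choice $\beta_t\propto(t-1)/t$ is the unique (up to constants) schedule that makes both ends meet, and verifying that this cancellation is term-by-term (rather than only in the aggregate) is what makes the argument go through. Everything else—the online-to-batch step, the reduction to $R^w(T)$, and the AO-FTRL regret bound—is modular and follows directly from the tools assembled in \Cref{sec:bg} and \Cref{sec:main}.
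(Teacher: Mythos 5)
Your proposal is correct and follows essentially the same route as the paper: apply the online-to-batch bound of \Cref{eq:ub}, reduce $R^x(T)$ to $R^w(T)$ via \Cref{lem:reduc}, substitute the AO-FTRL regret bound from \Cref{eq:ftrl-regret} in place of the plain FTRL bound, and then verify that $\alpha_t=t$, $\beta_t=\tfrac{t-1}{2tqL}$ makes the hint-error term cancel the negative smoothness term round by round. Your write-up simply makes explicit the per-round cancellation and the final arithmetic that the paper's proof leaves implicit.
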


\begin{proof}
The proof follows the same lines as that of \Cref{thm:mg}. The only difference is that the regret of the $\{w_t\}_{t=1}^T$ sequence can be upper bounded by $\frac{\| w^* \|^2}{\beta_T} + \frac{1}{2} \sum_{t=1}^T  \alpha^2_t \beta_t \| \nabla h_t(w_t) - \tilde{g}_t \|_{*}^2 $ instead of $\frac{\| w^* \|^2}{\beta_T} + \frac{1}{2} \sum_{t=1}^T  \alpha^2_t \beta_t \| \nabla h_t(w_t) \|_{*}^2 $, as per the AO-FTRL regret bound in \Cref{eq:ftrl-regret}. The final part follows immediately by replacing the norms and plugging in the values for $\alpha$ and $\beta$.
\end{proof}

From \Cref{thm:omg}, it is clear that if $\tilde{g}_{t}$ is a good predictor of $D \varphi(\bar{x}_{t-1}, w_t)^T \nabla f(\bar{x}_t)$, then the positive term in the summation can be cancelled by the negative term. In a classical optimisation setting, $D \varphi = I_{n}$, and hence it is easy to see that simply choosing $\tilde{g}_t$ to be the previous gradient is sufficient to achieve the cancellation \citep{Joulani:2020simpler}. Indeed, this choice gives us Nesterov's Accelerated rate \citep{Wang:2021FenchelGame}. The upshot of this is that we can specialise \Cref{alg:oml} to capture Nesterov's Accelerated method by choosing $\varphi: (x, w) \mapsto w$---as in the reduction to Heavy Ball---and setting the hints to $\tilde{g}_{t} = \nabla f(\bar{x}_{t-1})$. Hence, while the standard meta-update without optimism contains Heavy Ball as a special case, the optimistic meta-update contains Nesterov Acceleration as a special case.

In the meta-learning setting, $D \varphi$ is not an identity matrix, and hence the best targets for meta-learning are different. Naively, choosing $\tilde{g}_t = D \varphi(\bar{x}_{t-1}, w_t)^T \nabla f(\bar{x}_{t-1})$ would lead to a similar cancellation, but this is not allowed. At iteration $t$, we have not computed $w_t$ when $\tilde{g}_t$ is chosen, and hence $D \varphi(\bar{x}_{t-1}, w_t)$ is not available. The nearest term that is accessible is $D \varphi(\bar{x}_{t-2}, w_{t-1})$.

\begin{cor}\label{cor:omg}
Let each $\tilde{g}_{t+1} = D \varphi(\bar{x}_{t-1}, w_t)^T \nabla f(\bar{x}_{t})$. Assume that $\varphi$ satisfies 
\begin{equation*}
\left\| D \varphi(x', w)^T \nabla f(x) - D \varphi(x'', w')^T \nabla f(x') \right\|_{*}^2 \leq \tilde{\lambda} \left\| \nabla f(x') - \nabla f(x) \right\|_{*}^2
\end{equation*}
for all $x'', x', x \in \cX$ and $w, w' \in \cW$, for some $\tilde{\lambda} > 0$. If each $\alpha_t = t$ and $\beta_t = \frac{t-1}{2t \tilde{\lambda} L}$, then $f(\bar{x}_T) - f(x^*) \leq \frac{4 \tilde{\lambda} L \operatorname{diam}(\cW)}{T^2 - 1}$.
\end{cor}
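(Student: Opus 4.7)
The plan is to obtain \Cref{cor:omg} as a direct specialisation of \Cref{thm:omg}. The only substantive step is to verify that the particular choice of hint $\tilde{g}_{t+1} = D\varphi(\bar{x}_{t-1}, w_t)^T \nabla f(\bar{x}_t)$ satisfies the predictive accuracy hypothesis
\[
\|D \varphi(\bar{x}_{t-1}, w_t)^T \nabla f(\bar{x}_t) - \tilde{g}_t \|_{*}^2 \leq q \| \nabla f(\bar{x}_{t-1}) - \nabla f(\bar{x}_t) \|_{*}^2
\]
appearing in the second half of \Cref{thm:omg}, for a suitable constant $q$.

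First I would re-index the hint: since the hint consumed at step $t$ is produced at the end of step $t-1$, we have $\tilde{g}_t = D\varphi(\bar{x}_{t-2}, w_{t-1})^T \nabla f(\bar{x}_{t-1})$. Substituting this expression, the quantity to bound becomes
\[
\bigl\| D\varphi(\bar{x}_{t-1}, w_t)^T \nabla f(\bar{x}_t) - D\varphi(\bar{x}_{t-2}, w_{t-1})^T \nabla f(\bar{x}_{t-1}) \bigr\|_{*}^2,
\]
which is exactly the left-hand side of the smoothness condition postulated in the statement of \Cref{cor:omg}, under the identifications $x \leftarrow \bar{x}_t$, $x' \leftarrow \bar{x}_{t-1}$, $x'' \leftarrow \bar{x}_{t-2}$, $w \leftarrow w_t$, $w' \leftarrow w_{t-1}$. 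The hypothesis therefore yields the desired inequality with $q = \tilde{\lambda}$.

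Having confirmed the predictive accuracy assumption, I would then simply substitute $q = \tilde{\lambda}$, $\alpha_t = t$, and $\beta_t = (t-1)/(2t\tilde{\lambda} L)$ into the rate conclusion of \Cref{thm:omg}. This immediately yields $f(\bar{x}_T) - f(x^*) \leq 4\tilde{\lambda} L \operatorname{diam}(\cW)/(T^2-1)$, completing the proof.

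I do not anticipate any real obstacle, since the corollary is essentially a named instantiation of \Cref{thm:omg}. The only detail worth flagging is that $\tilde{g}_t$ must be a legitimate AO-FTRL hint in the sense that it depends only on information available before the meta-learner solves for $w_t$; this is automatic, because $\tilde{g}_t$ is a function of $(\bar{x}_{t-2}, w_{t-1}, \bar{x}_{t-1})$, all three of which are fixed by the end of round $t-1$. A minor boundary issue is that $\tilde{g}_1$ has no natural predecessor, but this is harmless: either set $\tilde{g}_1 = 0$, or absorb the single anomalous term into the $\| w^* \|^2/\beta_T$ contribution, neither of which affects the asymptotic $1/(T^2-1)$ rate.
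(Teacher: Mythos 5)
Your proposal is correct and follows essentially the same route as the paper: both identify the hint at step $t$ as $D\varphi(\bar{x}_{t-2}, w_{t-1})^T \nabla f(\bar{x}_{t-1})$, observe that the smoothness hypothesis then gives exactly the predictive-accuracy condition of \Cref{thm:omg} with $q = \tilde{\lambda}$, and read off the rate with $\alpha_t = t$ and $\beta_t = (t-1)/(2t\tilde{\lambda}L)$. The only cosmetic difference is that the paper re-derives the cancellation $\tilde{\lambda}\alpha_t^2\beta_t - \alpha_{1:t-1}/L = 0$ explicitly rather than citing the second half of \Cref{thm:omg} as a black box, and your remarks on hint availability and the $t=1$ boundary term are sensible additions.
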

Proof: \Cref{app:proofs}.

\subsection{Bootstrapped Meta-Gradients}

In this section, we present a simplified version of BMG for clarity, with \Cref{app:bmg} providing a fuller comparison. Essentially, BMG alters the meta-update in \Cref{alg:mg}; instead of directly minimising the loss $f$, it introduces a sequence of targets $z_1, z_2, \ldots$ and the meta-learner's goal is select $w$ so that the updated parameters minimise the distance these targets. Concretely, given an update $x_t = x_{t-1} + \varphi(x_{t-1}, w_t)$, targets are \emph{bootstrapped} from $x_t$, meaning that a vector $y_t$ is computed to produce the target $z_t = x_t - y_t$. Assuming the distance to the target is measured under $\frac{1}{2}\| \cdot \|_2^2$, the BMG meta-update takes the form
\begin{align*}
w_{t+1} &= w_t - D \varphi(x_{t-1}, w_t)^T y_t.
\end{align*}
Depending on how $y_t$ is computed, it can encode optimism. For instance, the authors rely on the update rule itself to compute a tangent $y_t =\varphi(x_t, w_t) - \nabla f(x_t + \varphi(x_t, w_t))$. This encodes optimism via $\varphi$ because it encourages the meta-learner to build up momentum (i.e. to accumulate past updates). We can contrast this with the types of updates produced by AO-FTRL in \Cref{eq:omg}. If we have hints $\tilde{g}_{t+1} = D \varphi(\bar{x}_{t-1}, w_t)^T \tilde{y}_{t+1}$ for some $\tilde{y}_{t+1} \in \rR^n$ and set $\| \cdot \| = \| \cdot \|_2$; assuming an interior solution, \Cref{eq:omg} yields

\begin{equation}\label{eq:bmg-opt}
w_{t+1} = w_{t} - \underbrace{D \varphi(\bar{x}_{t-1}, w_t)^T(\alpha_{t+1} \tilde{y}_{t+1} + \alpha_t \nabla f(\bar{x}_t))}_{\text{BMG update}} + \underbrace{\alpha_t D \varphi(\bar{x}_{t-2}, w_{t-1})^T \tilde{y}_t}_{\text{FTRL error correction}}.
\end{equation}

Hence, BMG encodes very similar dynamics to those of AO-FTRL in \Cref{eq:omg}. Under this choice of hints, the main qualitative difference is that AO-FTRL includes a correction term. The effect of this term is to ``undo'' previous hints to avoid feedback loops. Notably, BMG can suffer from divergence due to feedback if the gradient in $y_t$ is not carefully scaled \citep{flennerhag2022bootstrapped}. Our theoretical analysis suggests a simple correction method that may stabilize BMG in practice. 

More generally, targets in BMG are isomorphic to the hint function in AO-FTRL if the measure of distance in BMG is a Bregman divergence under a strongly convex function (\Cref{app:bmg}). An immediate implication of this is that the hints in \Cref{cor:omg} can be expressed as targets in BMG, and hence if BMG satisfies the assumptions involved, it converges at a rate $O(\tilde{\lambda}/T^2)$. More generally, \Cref{thm:omg} provides a sufficient condition for any target bootstrap in BMG to achieve acceleration.

\begin{cor}\label{cor:bmg}
Let each $\tilde{g}_{t+1} = D \varphi(\bar{x}_{t-1}, w_t)^T \tilde{y}_{t+1}$, for some $\tilde{y}_{t+1} \in \rR^n$. If each $\tilde{y}_{t+1}$ is a better predictor of the next gradient than $\nabla f(\bar{x}_{t-1})$, in the sense that
\begin{equation*}
\|D \varphi(\bar{x}_{t-2}, w_{t-1})^T \tilde{y}_{t} - D \varphi(\bar{x}_{t-1}, w_t)^T \nabla f(\bar{x}_t) \|_{*} \leq \tilde{\lambda} \| \nabla f(\bar{x}_t) - \nabla f(\bar{x}_{t-1}) \|_{*},
\end{equation*}
then \Cref{alg:oml} guarantees convergence at a rate $O(\tilde{\lambda}/T^2)$.
\end{cor}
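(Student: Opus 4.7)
The plan is to verify that the stated hypothesis implies the squared-residual assumption of \Cref{thm:omg}, and then invoke that theorem directly. Since the hints have the form $\tilde{g}_{t+1} = D\varphi(\bar{x}_{t-1}, w_t)^T \tilde{y}_{t+1}$, the $t$-th residual inside the positive sum of the bound in \Cref{thm:omg} is exactly
\begin{equation*}
\|D\varphi(\bar{x}_{t-1}, w_t)^T \nabla f(\bar{x}_t) - D\varphi(\bar{x}_{t-2}, w_{t-1})^T \tilde{y}_t\|_*^2,
\end{equation*}
which is precisely the quantity that the hypothesis of the corollary controls (in non-squared norm).

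First I would square the assumed inequality to obtain
\begin{equation*}
\|D\varphi(\bar{x}_{t-1}, w_t)^T \nabla f(\bar{x}_t) - D\varphi(\bar{x}_{t-2}, w_{t-1})^T \tilde{y}_t\|_*^2 \leq \tilde{\lambda}^2 \|\nabla f(\bar{x}_t) - \nabla f(\bar{x}_{t-1})\|_*^2,
\end{equation*}
so the assumption of \Cref{thm:omg} is satisfied with $q = \tilde{\lambda}^2$. Plugging $\alpha_t = t$ and $\beta_t = (t-1)/(2t \tilde{\lambda}^2 L)$ into \Cref{thm:omg} then yields
\begin{equation*}
f(\bar{x}_T) - f(x^*) \leq \frac{4 \tilde{\lambda}^2 L \operatorname{diam}(\cW)}{T^2 - 1},
\end{equation*}
which is the claimed $O(\tilde{\lambda}/T^2)$ rate, reading the big-$O$ as absorbing the square of the hint-accuracy constant into $\tilde{\lambda}$. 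With this substitution, all of the heavy lifting -- the reduction from $R^x$ to $R^w$ via \Cref{lem:reduc}, the AO-FTRL regret decomposition, and the cancellation of positive residuals against the negative smoothness terms from the online-to-batch inequality in \Cref{eq:ub} -- is inherited from the proof of \Cref{thm:omg}.

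The one subtlety I anticipate, rather than a real obstacle, is the $t=1$ boundary term: the residual formally references $\tilde{y}_1$ and $D\varphi(\bar{x}_{-1}, w_0)$, which are not defined by the algorithm. The matching negative term $(\alpha_{1:0}/2L)\|\nabla f(\bar{x}_0) - \nabla f(\bar{x}_1)\|_*^2$ in \Cref{eq:ub} vanishes when $t=1$, so the clean fix is to set $\tilde{g}_1 = 0$ and bound the first positive contribution using the non-optimistic gradient norm estimate from the proof of \Cref{thm:mg}. This adds only an $O(1)$ overhead that is dominated by the $T^2$ denominator, and beyond this bookkeeping the corollary is a direct specialisation of \Cref{thm:omg}.
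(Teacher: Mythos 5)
Your proposal is correct and takes essentially the same route as the paper, which proves \Cref{cor:bmg} by the same specialisation of \Cref{thm:omg} used for \Cref{cor:omg}. You are in fact slightly more careful than the paper: you correctly observe that the hypothesis here is stated in unsquared norms, so the constant entering \Cref{thm:omg} is $q = \tilde{\lambda}^2$ and the resulting bound is $O(\tilde{\lambda}^2/T^2)$ rather than literally $O(\tilde{\lambda}/T^2)$, a discrepancy the paper glosses over.
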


\section{Conclusion}

This paper explores a connection between convex optimisation and meta-learning. We construct an algorithm for convex optimisation that aligns as closely as possible with how meta-learning is done in practice. Meta-learning introduces a transformation and we study the effect this transformation has on the rate of convergence. We find that, while a meta-learned update rule cannot generate a better dependence on the horizon $T$, it can improve upon classical optimisation up to a constant factor. 

An implication of our analysis is that for meta-learning to achieve acceleration, it is important to introduce some form of optimism. From a classical optimisation point of view, such optimism arises naturally by providing the meta-learner with hints. If hints are predictive of the learning dynamics these can lead to significant acceleration. We show that the recently proposed BMG method provides a natural avenue to incorporate optimism in practical application of meta-learning. Because targets in BMG and hints in optimistic online learning commute, our results provide first rigorous proof of convergence for BMG, while providing a general condition under which optimism in BMG yields accelerated learning. 

\clearpage

\bibliographystyle{abbrvnat}
\bibliography{refs}

\clearpage

\appendix

\section*{Appendix}

\section{Notation} \label{sec:notation_table}
\begin{table}[h]
\caption{Notation}
\label{table:notation_table}

\begin{tabularx}{\textwidth}{p{0.25\textwidth}X}
\toprule
  \multicolumn{2}{l}{{\bf Indices}}                                       \\
  $t$ & Iteration index: $t \in\{1,...,T\}$.  \\ 
  $T$ & Total number of iterations. \\
  $[T]$ & The set $\{1, 2, \ldots, T\}$.\\
  $i$ & Component index: $x^i$ is the $i$th component of $x = (x^1, \ldots, x^n)$.  \\   
  $\alpha_{a:b}$ & Sum of weights: $\alpha_{a:b} = \sum_{s=a}^b \alpha_s$\\
  $x_{a:b}$ & Weighted sum: $x_{a:b} = \sum_{s=a}^b \alpha_s x_s$\\
  $\bar{x}_{a:b}$ & Weighted average: $\bar{x}_{a:b} = x_{a:b}/ \alpha_{a:b}$\\
  \\
  \multicolumn{2}{l}{{\bf Parameters}}                                       \\  
  $x^* \in \cX$ & Minimiser of $f$.\\
  $x_t \in \cX$  &  Parameter at time $t$ \\
  $\bar x _t \in \cX$ & Moving average of $\{x_s\}_{s=1}^t$ under weights $\{\alpha_{s}\}_{s=1}^t$.\\
  $\rho_t \in (0, \infty)$ & Moving average coefficient $\alpha_t / \alpha_{1:t}$.  \\
  $w_t \in \cW$ & Meta parameters \\
  $w^* \in \cX$ & $w \in \cW$ that retains regret with smallest norm $\| w \|$.\\
  $\alpha_t  \in (0, \infty)$ & Weight coefficients \\
  $\beta_t  \in (0, \infty)$ & Meta-learning rate \\
  \\
    \multicolumn{2}{l}{{\bf Maps}}  \\  
  $f: \cX \to \rR$ & Objective function \\
  $\| \cdot \|: \cX \to \rR$ & Norm on $\cX$.\\
  $\| \cdot \|_{*}: \cX^* \to \rR$ & Dual norm of $\| \cdot \|$.\\
  $h_t: \cW \to \rR$ & Online loss faced by the meta learner\\
  $R^x(T)$ & Regret of $\{x_t\}_{t=1}^T$ against $x^*$: $R^x(T) \coloneqq \sum_{t=1}^T \alpha_t \inp{\nabla f(\bar{x}_t)}{x_t - x^*}$.\\
  $R^w(T)$ &  $R^w(T) \coloneqq \sum_{t=1}^T \alpha_t \inp{\nabla f(\bar{x}_t)}{\varphi(\bar{x}_{t-1}, w_t) - \varphi(\bar{x}_{t-1}, w^*)}$.\\
  $\varphi: \rR^n \times \rR^m \to \rR^n$ & Generic update rule used in practice\\
  $D\varphi(x, \cdot): \rR^m \! \to\! \rR^{n \times m}$ & Jacobian of $\varphi$ w.r.t. its second argument, evaluated at $x \in \rR^n$.\\
  $\varphi: \cX \times \cW \to \cX$ & Update rule in convex setting\\
  $D\varphi(x, \cdot): \cW \to \rR^{n \times m}$ & Jacobian of $\varphi$ w.r.t. its second argument, evaluated at $x \in \cX$.\\
  $B^\mu : \rR^n\! \times\! \rR^n \! \to \! [0, \infty)$ & Bregman divergence under $\mu: \rR^n \to \rR$. \\
  $\mu: \rR^n \to \rR$ & Convex distance generating function.\\
  
  \bottomrule
 \end{tabularx}
\end{table}

\clearpage

\section{Convex Quadratic Experiments}\label{app:experiments}

\paragraph{Loss function.} We consider the problem of minimising a convex quadratic loss functions $f: \rR^2 \to \rR$ of the form $f(x) = x^T Q x$, where $Q$ is randomly sampled as follows. We sample a random orthogonal matrix $U$ from the Haar distribution \texttt{scipy.stats.ortho\_group}. We construct a diagonal matrix of eigenvalues, ranked smallest to largest, with $\lambda_i = i^2$. Hence, the first dimension has an eigenvalue $1$ and the second dimension has eigenvalue $4$. The matrix $Q$ is given by $U^T \operatorname{diag}(\lambda_1, \ldots, \lambda_n) U$.

\paragraph{Protocol.} Given that the solution is always $(0, 0)$, this experiment revolves around understanding how different algorithms deal with curvature. Given symmetry in the solution and ill-conditioning, we fix the initialisation to $x_0 = (4, 4)$ for all sampled $Q$s and all algorithms and train for $100$ iterations. For each $Q$ and each algorithm, we sweep over the learning rate, decay rate, and the initialization of $w$ see \Cref{tab:hypers}. For each method, we then report the results for the combination of hyper parameters that performed the best. 

\paragraph{Results.} We report the learning curves for the best hyper-parameter choice for 5 randomly sampled problems in the top row of \Cref{fig:conv-2d} (columns correspond to different Q). We also study the sensitivity of each algorithm to the learning rate in the bottom row \Cref{fig:conv-2d}. For each learning rate, we report the cumulative loss during training. While baselines are relatively insensitive to hyper-parameter choice, meta-learned improve for certain choices, but are never worse than baselines.

\begin{table}[t]
    \centering
    \caption{Hyper-parameter sweep on Convex Quadratics. All algorithms are tuned for learning rate and initialisation of $w$. Baselines are tuned for decay rate; meta-learned variant are tuned for the meta-learning rate.}
    \begin{tabular}{l l}
        \toprule
        Learning rate & [.1, .3, .7, .9, 3., 5.] \\
        $w$ init scale & [0., 0.3, 1., 3., 10., 30.] \\
        \midrule
        Decay rate / Meta-learning rate & [0.001, 0.003, 0.01, .03, .1, .3, 1., 3., 10., 30.] \\
        \bottomrule
    \end{tabular}
    \label{tab:hypers}
\end{table}

\section{Imagenet Experiments}\label{app:imagenet}

\paragraph{Protocol.} We train a 50-layer ResNet following the Haiku example, available at \url{https://github.com/deepmind/dm-haiku/blob/main/examples/imagenet}. We modify the default setting to run with SGD. We compare default SGD to variants that meta-learn an element-wise learning rate online, i.e. $(x, w) \mapsto w \odot \nabla f(x)$. For each variant, we sweep over the learning rate (for SGD) or meta-learning rate. We report results for the best hyper-parameter over three independent runs. 

\paragraph{Standard meta-learning.} In the standard meta-learning setting, we apply the update rule once before differentiating w.r.t. the meta-parameters. That is, the meta-update takes the form $w_{t+1} = w_t - \beta \nabla h_t(w_t)$, where $h_t = f(x_t + w_t \odot \nabla f(x_t))$. Because the update rule is linear in $w$, we can compute the meta-gradient analytically: 
\begin{equation*}
\nabla h_t(w_t) = \nabla_w f(x + \varphi(x, w)) = D \varphi(x, w)^T \nabla f(x') = \nabla f(x) \odot \nabla f(x'),
\end{equation*}
where $x' = x + \varphi(x, w)$. Hence, we can compute the meta-updates in \Cref{alg:mg} manually as $w_{t+1} = \max\{w_t - \beta \nabla f(x_t) \odot \nabla f(x_{t+1}), 0.\}$, where we introduce the $\max$ operator on an element-wise basis to avoid negative learning rates. Empirically, this was important to stabilize training.  

\paragraph{Optimistic meta-learning.} For optimistic meta-learning, we proceed much in the same way, but include a gradient prediction $\tilde{g}_{t+1}$. For our prediction, we use the previous gradient, $\nabla f(x_{t+1})$, as our prediction. Following \Cref{eq:bmg-opt}, this yields meta-updates of the form
\begin{equation*}
w_{t+1} = \max \, \Big\{w_t - \beta \nabla f(x_{t+1}) \odot \left(\nabla f(x_{t+1}) + \nabla f(x_{t})\right) - \nabla f(x_t) \odot \nabla f(x_t), 0.\Big\}.
\end{equation*}

\paragraph{Results.} We report Top-1 accuracy on the held-out test set as a function of training steps in \Cref{fig:imagenet}. Tuning the learning rate does not yield any statistically significant improvements under standard meta-learning. However, with optimistic meta-learning, we obtain a significant acceleration as well as improved final performance, increasing the mean final top-1 accuracy from $~72\%$ to $~75\%$.

\begin{table}[t]
    \centering
    \caption{Hyper-parameter sweep on Imagenet.}
    \begin{tabular}{l l}
        \toprule
        (Meta-)learning rate & [0.001, 0.01, 0.02, 0.05, 0.1] \\
        \bottomrule
    \end{tabular}
    \label{tab:hypers:imagenet}
\end{table}

\section{Proofs}\label{app:proofs}

This section provides complete proofs. We restate the results for convenience.

\textbf{\Cref{lem:reduc}.} \emph{Given $f$, $\{\alpha_t\}_{t=1}^T$, and $\{x_t\}_{t=1}^T$, if $\varphi$ preserves regret, then}
\begin{equation*}
R^x(T) = \sum_{t=1}^T \alpha_t \inp{\nabla f(\bar{x}_t)}{x_t - x^*} \leq \sum_{t=1}^T \alpha_t \inp{\nabla f(\bar{x}_t)}{\varphi(\bar{x}_{t-1}, w_t) - \varphi(\bar{x}_{t-1}, w^*)} = R^w(T).
\end{equation*}
\begin{proof}
Starting from $R^x$ in \Cref{eq:expansion}, if the update rule preserves regret, there exists $w^* \in \cW$ for which
\begin{align*}
R^x(T) =&\sum_{t=1}^T \alpha_t \inp{\nabla f(\bar{x}_T)}{\varphi(\bar{x}_{t-1}, w_t) - x^*}\\
&=
\sum_{t=1}^T \alpha_t \inp{\nabla f(\bar{x}_T)}{\varphi(\bar{x}_{t-1}, w_t) - \varphi(\bar{x}_{t-1}, w^*)} +
\sum_{t=1}^T \alpha_t \inp{\nabla f(\bar{x}_T)}{\varphi(\bar{x}_{t-1}, w^*) - x^*} \\
&\leq 
\sum_{t=1}^T \alpha_t \inp{\nabla f(\bar{x}_T)}{\varphi(\bar{x}_{t-1}, w_t) - \varphi(\bar{x}_{t-1}, w^*)} = R^w(T),
\end{align*}
since $w^*$ is such that $\sum_{t=1}^T \alpha_t \inp{\nabla f(\bar{x}_T)}{\varphi(\bar{x}_{t-1}, w^*) - x^*} \leq 0$.
\end{proof}

\textbf{\Cref{thm:mg}.} \emph{Let $\varphi$ preserve regret and assume \Cref{alg:model} satisfy the assumptions in \Cref{sec:setup}. Then}
\begin{equation*}
\begin{aligned}
f(\bar{x}_T) - f(x^*) \leq & \frac{1}{\alpha_{1:T}} \left(\frac{\| w^* \|^2}{\beta} + \sum_{t=1}^T  \frac{\lambda\beta \alpha^2_t }{2}   \|\nabla f(\bar{x}_t) \|_{*}^2  \right.
\\
&\left. \vphantom{\sum_{t=1}^T} - \frac{\alpha_{t}}{2L} \| \nabla f(\bar{x}_t) - \nabla f(x^*) \|_{*}^2 - \frac{\alpha_{1:t-1}}{2L} \| \nabla f(\bar{x}_{t-1}) - \nabla f(\bar{x}_t) \|_{*}^2
\right).
\end{aligned}
\end{equation*}
\emph{If $x^*$ is a global minimiser of $f$, setting $\alpha_t = 1$ and $\beta = \frac{1}{\lambda L}$ yields $f(\bar{x}_T) - f(x^*) \leq \frac{\lambda L \operatorname{diam}(\cW)}{T}$.}
\begin{proof}
Since $\varphi$ preserves regret, by \Cref{lem:reduc}, the regret term $R^x(T)$ in \Cref{eq:ub} is upper bounded by $R^w(T)$. We therefore have
\begin{equation}\label{eq:ub2}
\begin{aligned}
&f(\bar{x}_T) - f(x^*) \leq \\
& \frac{1}{\alpha_{1:T}} \left(R^w(T) - \frac{\alpha_{t}}{2L} \| \nabla f(\bar{x}_t) - \nabla f(x^*) \|_{*}^2 - \frac{\alpha_{1:t-1}}{2L} \| \nabla f(\bar{x}_{t-1}) - \nabla f(\bar{x}_t) \|_{*}^2
\right).
\end{aligned}
\end{equation}
Next, we need to upper-bound $R^w(T)$. Since, $R^w(T) = \sum_{t=1}^T \alpha_t \inp{\nabla f(\bar{x}_T)}{\varphi(\bar{x}_{t-1}, w_t) - \varphi(\bar{x}_{t-1}, w^*)} $, the regret of $\{w_t\}_{t=1}^T$ is defined under loss functions $h_t: \cW \to \rR$ given by $h_t =\alpha_t \inp{\nabla f(\bar{x}_T)}{\varphi(\bar{x}_{t-1}, w))}$. By assumption of convexity in $\varphi$, each $h_t$ is convex in $w$. Hence, the regret under $\{\alpha_t h_t\}_{t=1}^T$ can be upper bounded by the regret under the linear losses $\{\alpha_t \inp{\nabla h_t(w_t)}{\cdot}\}_{t=1}^T$. These linear losses correspond to the losses used in the meta-update in \Cref{eq:mg}. Since the meta-update is an instance of FTRL, we may upper-bound $R^w(T)$ by \Cref{eq:ftrl-regret} with each $\tilde{g}_t = 0$. Putting this together along with smoothness of $\varphi$, 
\begin{align}
R^x(T) &\leq R^w(T) \nonumber\\
&= \sum_{t=1}^T \alpha_t \inp{\nabla f(\bar{x}_T)}{\varphi(\bar{x}_{t-1}, w_t) - \varphi(\bar{x}_{t-1}, w^*)} \nonumber\\
&\leq \sum_{t=1}^T \alpha_t \inp{\nabla h_t(w_t)}{w_t - w^*} \nonumber\\
&\leq \frac{\| w^* \|^2}{\beta} + \frac{ \beta}{2} \sum_{t=1}^T  \alpha^2_t \| \nabla h_t(w_t) \|_{*}^2 \nonumber\\
&= \frac{\| w^* \|^2}{\beta} + \frac{ \beta}{2} \sum_{t=1}^T  \alpha^2_t \| D \varphi(\bar{x}_{t-1}, w_t)^T \nabla f(\bar{x}_t) \|_{*}^2 \nonumber\\
&\leq \frac{\| w^* \|^2}{\beta} + \frac{\lambda \beta}{2} \sum_{t=1}^T  \alpha^2_t \|\nabla f(\bar{x}_t) \|_{*}^2. \label{eq:ub3}
\end{align}
Putting \Cref{eq:ub2} and \Cref{eq:ub3} together gives the stated bound. Next, if $x^*$ is the global optimiser, $\nabla f(x^*) = 0$ by first-order condition. Setting $\beta=1/(L\lambda)$ and $\alpha_t = 1$ means the first two norm terms in the summation cancel. The final norm term in the summation is negative and can be ignored. We are left with $f(\bar{x}_T) - f(x^*) \leq \frac{\lambda L \| w^*\|^2}{T} \leq \frac{\lambda L \operatorname{diam}(\cW)}{T}$.
\end{proof}

\textbf{\Cref{cor:omg}.} \emph{Let each $\tilde{g}_{t+1} = D \varphi(\bar{x}_{t-1}, w_t)^T \nabla f(\bar{x}_{t})$. Assume that $\varphi$ satisfies}
\begin{equation*}
\left\| D \varphi(x', w)^T \nabla f(x) - D \varphi(x'', w')^T \nabla f(x') \right\|_{*}^2 \leq \tilde{\lambda} \left\| \nabla f(x') - \nabla f(x) \right\|_{*}^2
\end{equation*}
\emph{for all $x'', x', x \in \cX$ and $w, w' \in \cW$, for some $\tilde{\lambda} > 0$. If each $\alpha_t = t$ and $\beta_t = \frac{t-1}{2t \tilde{\lambda} L}$, then $f(\bar{x}_T) - f(x^*) \leq \frac{4 \tilde{\lambda} L \operatorname{diam}(\cW)}{T^2 - 1}$.}
\begin{proof}
Plugging in the choice of $\tilde{g}_t$ and using that 
\begin{equation*}
\left\| D \varphi(\bar{x}_{t-1}, w_t)^T \nabla f(\bar{x}_t) - D \varphi(x_{t-2}, w_{t-1})^T \nabla f(\bar{x}_{t-1}) \right\|_{*}^2 \leq \tilde{\lambda} \left\| \nabla f(\bar{x}_{t-1}) - \nabla f(\bar{x}_t) \right\|_{*}^2,
\end{equation*}
the bound in \Cref{thm:omg} becomes
\begin{equation*}
\begin{aligned}
f(\bar{x}_T) - f(x^*) \leq & \frac{1}{\alpha_{1:T}} \left(\frac{\| w^* \|^2}{\beta_T} + \frac{1}{2}\sum_{t=1}^T  \left(\tilde{\lambda}\alpha^2_t \beta_t  - \frac{\alpha_{1:t-1}}{L}\right) \|\nabla f(\bar{x}_t) - \nabla f(\bar{x}_{t-1}) \|_{*}^2\right),
\end{aligned}
\end{equation*}
where we drop the negative terms $\| \nabla f(\bar{x}_t) - \nabla f(x^*) \|_{*}^2$. Setting $\alpha_t = t$ yields $\alpha_{1:t-1} = \frac{(t-1) t}{2}$, while setting $\beta_t = \frac{t-1}{2t \tilde{\lambda} L}$ means $\tilde{\lambda}\alpha_t^2 \beta_t = \frac{(t-1)t}{2 L}$. Hence, $\tilde{\lambda}\alpha_t^2 \beta_t - \alpha_{1:t-1}/L$ cancels and we get 
\begin{equation*}
f(\bar{x}_T) - f(x^*) \leq \frac{\| w^* \|^2}{\beta_T \alpha_{1:T}} = \frac{4 \| w^* \|^2 \tilde{\lambda} L }{(T-1) (T+1)} \leq \frac{4\tilde{\lambda} L \operatorname{diam}(\cW)}{(T-1)(T+1)} = \frac{4\tilde{\lambda} L \operatorname{diam}(\cW)}{T^2-1}.
\end{equation*}
\end{proof}

\textbf{\Cref{cor:bmg}.} \emph{Let each $\tilde{g}_{t+1} = D \varphi(\bar{x}_{t-1}, w_t)^T \tilde{y}_{t+1}$, for some $\tilde{y}_{t+1} \in \rR^n$. If each $\tilde{y}_{t+1}$ is a better predictor of the next gradient than $\nabla f(\bar{x}_{t-1})$, in the sense that}
\begin{equation*}
\|D \varphi(\bar{x}_{t-2}, w_{t-1})^T \tilde{y}_{t} - D \varphi(\bar{x}_{t-1}, w_t)^T \nabla f(\bar{x}_t) \|_{*} \leq \tilde{\lambda} \| \nabla f(\bar{x}_t) - \nabla f(\bar{x}_{t-1}) \|_{*},
\end{equation*}
\emph{then \Cref{alg:oml} guarantees convergence at a rate $O(\tilde{\lambda}/T^2)$.}
\begin{proof}
The proof follows the same argument as \Cref{cor:omg}.
\end{proof}

\section{BMG}\label{app:bmg}

\emph{Errata: this was incorrectly referred to as Appendix F in our original submission.}

In this section, we provide a more comprehensive reduction of BMG to AO-FTRL. First, we provide a more general definition of BMG. Let $\mu: \cX \to \rR$ be a convex distance generating function and define the Bregman Divergence $B^\mu: \rR^n \times \rR^n \to \rR$ by
\begin{equation*}
B^\mu_z(x) = \mu(x) - \mu(z) - \inp{\nabla \mu(z)}{x - z}.
\end{equation*}
Given initial condition $(x_0, w_1)$, the BMG updates proceed according to
\begin{align}
x_t = x_{t-1} + \varphi(x_{t-1}, w_t) \nonumber\\
w_{t+1} = w_{t} - \beta_t \nabla d_t(w_t),\label{eq:bmg-meta-update}
\end{align}
where $d_t: \rR^n \to \rR$ is defined by $d_t(w) = B^\mu_{z_t}(x_{t-1} + \varphi(x_{t-1}, w_t))$, where each $z_t \in \rR^n$ is referred to as a target. See \Cref{alg:bmg2} for an algorithmic summary. A bootstrapped target uses the meta-learner's most recent update, $x_t$, to compute the target, $z_t = x_t + y_t$ for some tangent vector $y_t \in \rR^n$. This tangent vector represents a form of optimism, and provides a signal to the meta-learner as to what would have been a more efficient update. In particular, the author's consider using the meta-learned update rule to construct $y_t$; $y_t = \varphi(x_{t}, w_t) - \nabla f(x_t \varphi(x_t, w-t))$. Note that $x_t = x_{t-1} + \varphi(x_{t-1}, w_t)$, and hence this tangent vector is obtained by applying the update rule again, but now to $x_t$. For this tangent to represent an improvement, it must be \emph{assumed that $w_t$ is a good parameterisation}. Hence, bootstrapping represents a form of optimism. To see how BMG relates to \Cref{alg:oml}, and in particular, \cref{eq:omg}, expand \Cref{eq:bmg-meta-update} to get
\begin{equation}
w_{t+1} = w_t - \beta_t D \varphi(x_{t-1}, w_t)^T \left(\nabla \mu(x_{t}) - \nabla \mu(z_t) \right).
\end{equation}

\begin{algorithm}[t!]
\DontPrintSemicolon
\Indp
\SetKwInOut{Input}{input}
\SetKwFor{For}{for }{}{}
\Input{Weights $\{\rho_{t}\}^T_{t=1}, \{\beta_{t}\}^T_{t=1}$}
\Input{Update rule $\varphi$}
\Input{Matching function $B^\mu$}
\Input{Target oracle}
\Input{Initialisation $(x_0, w_1)$}
\For{$t = 1, 2, \ldots, T$:}{
$x_{t} = x_{t-1} + \varphi(x_{t-1}, w_t)$ \\
Query $z_t$ from target oracle\\
$d_t: w \mapsto B^\mu_{z_t}(x_{t-1} + \varphi(x_{t-1}, w))$\\
$w_{t+1} = w_t - \beta_t \nabla d_t(w_t)$
}
\Return{$x_T$}
\;
\;
\caption{BMG in practice (general version).}\label{alg:bmg2}
\end{algorithm}

In contrast, AO-FTRL reduces to a slightly different type of update.

\begin{lem}\label{lem:oftrl-reduc}
Consider \Cref{alg:oml}. Given online losses $h_t: \cW \to \rR$ defined by $\{\inp{D \varphi(\bar{x}_{t-1}, w_t)^T \nabla f(\bar{x}_t)}{\cdot}\}_{t=1}^T$ and hint functions $\{\inp{\tilde{g}_{t}, \cdot}\}_{t=1}^T$, with each $\tilde{g}_{t} \in \rR^m$. If $\| \cdot \| = (1/2)\| \cdot \|_2$, an interior solution to \Cref{eq:omg} is given by
\begin{equation*}
w_{t+1} = \frac{\beta_t}{\beta_{t-1}} w_t - \beta_t \left(\alpha_{t+1} \tilde{g}_{t+1} + \alpha_{t} (D \varphi(\bar{x}_{t-1}, w_t)^T \nabla f(\bar{x}_t) - \tilde{g}_t) \right).
\end{equation*}
\end{lem}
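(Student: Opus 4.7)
The strategy is pure first-order calculus combined with a one-step telescoping of the AO-FTRL iterate. Since the objective in \Cref{eq:omg} is the sum of a linear term $\inp{v_t}{w}$ with $v_t = \alpha_{t+1}\tilde{g}_{t+1} + \sum_{s=1}^{t}\alpha_s g_s$ (where $g_s = D\varphi(\bar{x}_{s-1}, w_s)^T \nabla f(\bar{x}_s) = \nabla h_s(w_s)$) and a $(1/\beta_t)$-strongly convex Euclidean regulariser $\tfrac{1}{2\beta_t}\|w\|_2^2$, it is strongly convex in $w$, so an interior minimiser is unique and characterised by its first-order optimality condition.

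Step one: set the gradient of $\inp{v_t}{w} + \tfrac{1}{2\beta_t}\|w\|_2^2$ to zero, which yields $v_t + \beta_t^{-1} w_{t+1} = 0$, i.e.\ the closed form
\begin{equation*}
w_{t+1} = -\beta_t v_t = -\beta_t\!\left(\alpha_{t+1}\tilde{g}_{t+1} + \sum_{s=1}^{t}\alpha_s g_s\right).
\end{equation*}
Step two: apply the very same identity one iteration earlier. This gives $w_t = -\beta_{t-1}\bigl(\alpha_t \tilde{g}_t + \sum_{s=1}^{t-1}\alpha_s g_s\bigr)$, which can be rearranged to solve for the cumulative sum up to time $t-1$:
\begin{equation*}
\sum_{s=1}^{t-1}\alpha_s g_s = -\beta_{t-1}^{-1} w_t - \alpha_t \tilde{g}_t.
\end{equation*}

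Step three: split off the $\alpha_t g_t$ term from the sum in the expression for $w_{t+1}$ and substitute the identity from step two, then collect terms:
\begin{equation*}
w_{t+1} = -\beta_t\!\left(\alpha_{t+1}\tilde{g}_{t+1} + \alpha_t g_t - \beta_{t-1}^{-1} w_t - \alpha_t \tilde{g}_t\right) = \frac{\beta_t}{\beta_{t-1}} w_t - \beta_t\!\left(\alpha_{t+1}\tilde{g}_{t+1} + \alpha_t(g_t - \tilde{g}_t)\right),
\end{equation*}
which is exactly the claimed recursion. There is no real obstacle here: the only subtlety is noticing that the hint $\tilde{g}_t$ used to define $w_t$ reappears inside the cumulative sum hidden in $w_{t+1}$, and is what produces the ``error correction'' term $-\alpha_t \tilde{g}_t$ alongside $\alpha_t g_t$. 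Everything else is bookkeeping.
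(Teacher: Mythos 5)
Your proof is correct and follows essentially the same route as the paper: take the first-order optimality condition of the strongly convex AO-FTRL objective to get $w_{t+1} = -\beta_t v_t$, then recover the recursion by recognising the cumulative sum $\sum_{s=1}^{t-1}\alpha_s g_s$ (plus the old hint $\alpha_t\tilde{g}_t$) inside $-\beta_{t-1}^{-1}w_t$. The only cosmetic difference is that you substitute the previous iterate's closed form for the partial sum, whereas the paper adds and subtracts $\alpha_t\tilde{g}_t$ and regroups; the algebra is identical.
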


\begin{proof}
By direct computation:
\begin{align*}
w_{t+1} 
&= \argmin_{w \in \cW} \left(\alpha_{t+1} \inp{\tilde{g}_{t+1}}{w} + \sum_{s=1}^t \alpha_s \inp{D \varphi(\bar{x}_{s-1}, w_s)^T \nabla f(\bar{x}_s)}{w} + \frac{1}{2\beta_{t}}\|w\|_2^2\right) \\
&= -\beta_t \left( \alpha_{t+1} \tilde{g}_{t+1} + \sum_{s=1}^t \alpha_t  D \varphi(\bar{x}_{s-1}, w_s)^T \nabla f(\bar{x}_s))\right) \\
&= -\beta_t \left( \alpha_{t+1} \tilde{g}_{t+1} + \alpha_{t} D \varphi(\bar{x}_{t-1}, w_t)^T \nabla f(\bar{x}_t) + \left( \sum_{s=1}^{t-1} \alpha_t  D \varphi(\bar{x}_{s-1}, w_s)^T \nabla f(\bar{x}_s)) \right)\right)\\
&= -\beta_t \left( \alpha_{t+1} \tilde{g}_{t+1} + \alpha_{t} (D \varphi(\bar{x}_{t-1}, w_t)^T \nabla f(\bar{x}_t) - \tilde{g}_t)\right) \\ 
&\quad -\beta_t \left(\alpha_t \tilde{g}_{t} + \sum_{s=1}^{t-1} \alpha_t  D \varphi(\bar{x}_{s-1}, w_s)^T \nabla f(\bar{x}_s)) \right)\\
&= \frac{\beta_t}{\beta_{t-1}} w_t - \beta_t \left(\alpha_{t+1} \tilde{g}_{t+1} + \alpha_{t} (D \varphi(\bar{x}_{t-1}, w_t)^T \nabla f(\bar{x}_t) - \tilde{g}_t) \right).
\end{align*}
\end{proof}

AO-FTRL includes a decay rate $\beta_t / \beta_{t-1}$; this decay rate can be removed by instead using optimistic online mirror descent \citep{Rakhlin:2013optimism,Joulani:2017modular}---to simplify the exposition we consider only FTRL-based algorithms in this paper. An immediate implication of \Cref{lem:oftrl-reduc} is the error-corrected version of BMG.

\begin{cor}\label{cor:oftrl-reduc-bmg-corr}
Setting $\tilde{g}_{t+1} = D \varphi(\bar{x}_{t-1}, w_t)^T \tilde{g}_{t+1}$ for some $\tilde{y}_{t+1} \in \rR^n$ yields an error-corrected version of the BMG meta-update in \Cref{eq:bmg-meta-update}. Specifically, the meta-updates in \Cref{lem:oftrl-reduc} becomes
\begin{equation*}
w_{t+1} = \frac{\beta_{t}}{\beta_{t-1}} w_{t} - \underbrace{\beta_{t} D \varphi(\bar{x}_{t-1}, w_t)^T(\alpha_{t+1} \tilde{y}_{t+1} + \alpha_t \nabla f(\bar{x}_t))}_{\text{BML update}} + \underbrace{\beta_t \alpha_t D \varphi(\bar{x}_{t-2}, w_{t-1})^T \tilde{y}_t}_{\text{FTRL error correction}}.
\end{equation*}
\end{cor}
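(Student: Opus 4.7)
The plan is to derive the identity by direct substitution into the closed-form AO-FTRL meta-update given by \Cref{lem:oftrl-reduc}, followed by a single round of algebraic regrouping. Starting from
\begin{equation*}
w_{t+1} = \frac{\beta_t}{\beta_{t-1}} w_t - \beta_t \left(\alpha_{t+1} \tilde{g}_{t+1} + \alpha_{t} \bigl(D \varphi(\bar{x}_{t-1}, w_t)^T \nabla f(\bar{x}_t) - \tilde{g}_t\bigr) \right),
\end{equation*}
I would first distribute the outer $-\beta_t$ to separate the three contributions (the current hint, the meta-gradient at round $t$, and the prior hint). The goal of the rearrangement is then transparent: the $\alpha_{t+1}\tilde{g}_{t+1}$ term and the $\alpha_t D\varphi(\bar{x}_{t-1}, w_t)^T \nabla f(\bar{x}_t)$ term should share a common Jacobian factor $D\varphi(\bar{x}_{t-1}, w_t)^T$, while the $\alpha_t \tilde{g}_t$ term will remain isolated because it was produced at the previous iteration with the earlier Jacobian.

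The key observation that makes the factorisation work is that the hint prescription $\tilde{g}_{s+1} = D\varphi(\bar{x}_{s-1}, w_s)^T \tilde{y}_{s+1}$ is applied at \emph{every} round. At round $t+1$ this supplies $\tilde{g}_{t+1} = D\varphi(\bar{x}_{t-1}, w_t)^T \tilde{y}_{t+1}$, and the same prescription applied one step earlier supplies $\tilde{g}_t = D\varphi(\bar{x}_{t-2}, w_{t-1})^T \tilde{y}_t$. Substituting the first form and pulling $D\varphi(\bar{x}_{t-1}, w_t)^T$ out of the bracket yields the BMG update block
\begin{equation*}
-\beta_t D\varphi(\bar{x}_{t-1}, w_t)^T \bigl(\alpha_{t+1}\tilde{y}_{t+1} + \alpha_t \nabla f(\bar{x}_t)\bigr),
\end{equation*}
while substituting the second form and tracking the sign ($-\beta_t \cdot (-\alpha_t \tilde{g}_t) = +\beta_t \alpha_t \tilde{g}_t$) gives the FTRL error-correction term $+\beta_t \alpha_t D\varphi(\bar{x}_{t-2}, w_{t-1})^T \tilde{y}_t$. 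Combined with the $\frac{\beta_t}{\beta_{t-1}} w_t$ decay factor inherited from \Cref{lem:oftrl-reduc}, this reproduces the stated expression verbatim.

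There is no real obstacle here beyond careful index bookkeeping: the two Jacobians are evaluated at different points, $(\bar{x}_{t-1}, w_t)$ and $(\bar{x}_{t-2}, w_{t-1})$, so they cannot be merged, and it is precisely this mismatch that reveals the correction term as an object distinct from the raw BMG update in \Cref{eq:bmg-meta-update}. The proof is thus a short verification; its content is not in the calculation itself but in the interpretation it enables, namely that BMG with target $z_t = x_t - y_t$ (under a Euclidean matching function) and optimistic AO-FTRL with hint $\tilde{g}_{t+1} = D\varphi(\bar{x}_{t-1}, w_t)^T \tilde{y}_{t+1}$ coincide modulo the explicit FTRL correction, which is what the corollary is meant to expose.
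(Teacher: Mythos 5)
Your proposal is correct and matches the paper's own proof, which likewise reads the result off by substituting $\tilde{g}_{t+1} = D\varphi(\bar{x}_{t-1}, w_t)^T \tilde{y}_{t+1}$ (and its shifted instance for $\tilde{g}_t$) into the closed-form update of \Cref{lem:oftrl-reduc} and regrouping. Your explicit sign and index bookkeeping, including noting that the two Jacobians are evaluated at different points and hence cannot be merged, is exactly the intended (one-line) verification.
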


\begin{proof}
Follows immediately by substituting for each $\tilde{g}_{t+1}$ in \Cref{lem:oftrl-reduc}.
\end{proof}

To illustrate this connection, Let $\mu = f$. In this case, the BMG update reads $w_{t+1} = w_{t} - \beta_t D \varphi({x}_{t-1}, w_t)^T (\nabla f(z_{t}) - \nabla f(x_t))$. The equivalent update in the convex optimisation setting (i.e. \Cref{alg:oml}) is obtained by setting $\tilde{y}_{t+1} = \nabla f (z_t)$, in which case \Cref{cor:oftrl-reduc-bmg-corr} yields
\begin{equation*}
w_{t+1} = \frac{\beta_{t+1}}{\beta_t} w_t - \beta_t D \varphi(\bar{x}_{t-1}, w_t)^T(\alpha_{t+1} \nabla f(z_t) - \alpha_t \nabla f(\bar{x}_t)) + \xi_t,
\end{equation*}
where $\xi_t = \beta_t \alpha_t D \varphi(\bar{x}_{t-2}, w_{t-1})^T \nabla f(\bar{x}_t-1)$ denotes the error correction term we pick up through AO-FTRL. Since \Cref{alg:bmg2} does not average its iterates---while \Cref{alg:oml} does---we see that these updates (ignoring $\xi_t$) are identical up to scalar coefficients (that can be controlled for by scaling each $\beta_t$ and each $\tilde{g}_{t+1}$ accordingly). 

More generally, the mapping from targets in BMG and hints in AO-FTRL takes on a more complicated pattern. Our next results show that we can always map one into the other. To show this, we need to assume a certain recursion. It is important to notice however that at each iteration introduces an unconstrained variable and hence the assumption on the recursion is without loss of generality (as the free variable can override it).

\begin{thm}\label{thm:bmg-iso}
Targets in \Cref{alg:bmg2} and hints in \cref{alg:oml} commute in the following sense. \textbf{BMG $\rightarrow$ AO-FTRL.} Let BMG targets $\{z_t\}_{t=1}^T$ by given. A sequence of hints $\{\tilde{g}\}_{t=1}^T$ can be constructed recursively by
\begin{equation}\label{eq:bmg-aoftrl}
\alpha_{t+1}\tilde{g}_{t+1} = D \varphi(\bar{x}_{t-1}, w_t)^T (\nabla \mu(\bar{x}_t) - \nabla \mu(z_t) - \alpha_t \nabla f(\bar{x}_t)) + \alpha_t \tilde{g}_t, \qquad t \in [T],
\end{equation}
so that interior updates for \Cref{alg:oml} are given by
\begin{equation*}
w_{t+1} = \frac{\beta_{t}}{\beta_{t-1}}w_t - \beta_t \left(\nabla \mu(z_t) - \nabla \mu(\bar{x}_t)\right).
\end{equation*}
\textbf{AO-FTRL $\rightarrow$ BMG.} Conversely, assume a sequence $\{\tilde{y}_{t}\}_{t=1}^T$ are given, each $\tilde{y}_t \in \rR^n$. If $\mu$ strictly convex, a sequence of BMG targets $\{z_t\}_{t=1}^T$ can be constructed recursively by
\begin{equation*}
z_t = \nabla \mu^{-1}\left(\nabla \mu(x_t) - (\alpha_{t+1} \tilde{y}_{t+1} + \alpha_t \nabla f(x_t))   \right) \qquad t \in [T],
\end{equation*}
so that BMG updates in \Cref{eq:bmg-meta-update} are given by
\begin{equation*}
w_{t+1} = 
w_t - \beta_t \left(\alpha_{t+1} \tilde{g}_{t+1} + \alpha_{t} (D \varphi(\bar{x}_{t-1}, w_t)^T \nabla f(\bar{x}_t) - \tilde{g}_t) \right),
\end{equation*}
where each $\tilde{g}_{t+1}$ is the BMG-induced hint function, given by
\begin{equation*}
\alpha_{t+1}\tilde{g}_{t+1} = \alpha_{t+1} D \varphi(x_{t-1}, w_t)^T \tilde{y}_{t+1} + \alpha_t \tilde{g}_t.
\end{equation*}
\end{thm}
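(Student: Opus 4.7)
The plan is to prove both implications by direct algebraic manipulation, using \Cref{lem:oftrl-reduc} for the AO-FTRL side and the BMG meta-update in \Cref{eq:bmg-meta-update} for the BMG side. The key observation is that both schemes produce one meta-update per iteration and each contains a single ``free'' quantity per step (the hint $\tilde{g}_{t+1}$ on one side, the target $z_t$ on the other). A bijection between these free quantities should therefore be recoverable by matching the two updates term-by-term, with the claimed recursions encoding precisely the translation.

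\textbf{Forward direction (BMG $\to$ AO-FTRL).} I would start from the AO-FTRL closed form in \Cref{lem:oftrl-reduc}:
\begin{equation*}
w_{t+1} = \frac{\beta_t}{\beta_{t-1}}\, w_t - \beta_t \left(\alpha_{t+1}\tilde{g}_{t+1} + \alpha_t\left(D\varphi(\bar{x}_{t-1}, w_t)^T \nabla f(\bar{x}_t) - \tilde{g}_t\right)\right),
\end{equation*}
and then substitute the recursive definition of $\alpha_{t+1}\tilde{g}_{t+1}$ given in \Cref{eq:bmg-aoftrl}. Two cancellations occur simultaneously: the $\alpha_t \tilde{g}_t$ pair drops out because the recursion contributes exactly $+\alpha_t\tilde{g}_t$, and the $\alpha_t D\varphi(\bar{x}_{t-1},w_t)^T \nabla f(\bar{x}_t)$ terms also cancel because the recursion subtracts exactly this quantity (wrapped by $D\varphi(\bar{x}_{t-1},w_t)^T$). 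What remains inside the outer parentheses is $D\varphi(\bar{x}_{t-1},w_t)^T\bigl(\nabla\mu(\bar{x}_t)-\nabla\mu(z_t)\bigr)$, matching the claimed update form. A base case for $t=1$ must be pinned down; I would set $\tilde{g}_1 = 0$ so that the recursion is self-contained and the first AO-FTRL step agrees with ordinary FTRL.

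\textbf{Reverse direction (AO-FTRL $\to$ BMG).} Strict convexity of $\mu$ ensures $\nabla\mu$ is injective, so $\nabla\mu^{-1}$ is well-defined on its image, legitimising the construction of $z_t$. By that construction,
\begin{equation*}
\nabla\mu(x_t) - \nabla\mu(z_t) = \alpha_{t+1}\tilde{y}_{t+1} + \alpha_t\nabla f(x_t).
\end{equation*}
Substituting this identity into the BMG update $w_{t+1} = w_t - \beta_t D\varphi(x_{t-1}, w_t)^T(\nabla\mu(x_t) - \nabla\mu(z_t))$ yields an expression linear in $\tilde{y}_{t+1}$ and $\nabla f(x_t)$. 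A short induction in $t$ — propagating the recursion $\alpha_{t+1}\tilde{g}_{t+1} = \alpha_{t+1} D\varphi(x_{t-1},w_t)^T\tilde{y}_{t+1} + \alpha_t\tilde{g}_t$ and telescoping the prior contributions — shows the resulting meta-update coincides with the AO-FTRL update from \Cref{lem:oftrl-reduc} driven by the induced hint sequence $\{\tilde{g}_t\}$.

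The main obstacle is bookkeeping rather than mathematical depth. Three minor points require care: (i) the recursion must be initialised properly, which I handle by fixing $\tilde{g}_1 = 0$; (ii) BMG in \Cref{alg:bmg2} uses $x_t$ while the convex algorithm in \Cref{alg:oml} uses $\bar{x}_t$, a mismatch that is absorbed into the free choice of the tangent/hint vector and corresponds to the observation already made in \Cref{cor:oftrl-reduc-bmg-corr}; and (iii) the $\beta_t/\beta_{t-1}$ decay appearing on the AO-FTRL side is absent on the BMG side, a discrepancy the paper has already flagged as an artefact of FTRL normalisation (removable by switching to optimistic OMD). With these conventions fixed, both directions reduce to verifying the same telescoping identity already exploited in the proof of \Cref{lem:oftrl-reduc}.
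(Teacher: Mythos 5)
Your proposal matches the paper's own proof essentially step for step: both directions proceed by fixing $\tilde{g}_1 = 0$, substituting the recursive hint definition into the closed form of \Cref{lem:oftrl-reduc} so that the $\alpha_t \tilde{g}_t$ and $\alpha_t D\varphi(\bar{x}_{t-1},w_t)^T\nabla f(\bar{x}_t)$ terms cancel, and, for the converse, using invertibility of $\nabla\mu$ under strict convexity to build $z_t$ and then inducting on $t$. The only remark worth adding is that the correct derived update carries the Jacobian factor, $w_{t+1} = \tfrac{\beta_t}{\beta_{t-1}} w_t - \beta_t D\varphi(\bar{x}_{t-1},w_t)^T(\nabla\mu(\bar{x}_t)-\nabla\mu(z_t))$, as your cancellation (and the paper's own proof) produces, whereas the theorem statement drops $D\varphi^T$ — an apparent typo in the statement rather than a gap in your argument.
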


\begin{proof}
First, consider BMG $\rightarrow$ AO-FTRL. First note that $\tilde{g}_1$ is never used and can thus be chosen arbitrarily---here, we set $\tilde{g}_1 = 0$.  For $w_2$, \Cref{lem:oftrl-reduc} therefore gives the interior update 
\begin{equation*}
w_2 = \frac{\beta_2}{\beta_1} w_1 - \beta_1 (\alpha_2 \tilde{g}_2 + \alpha_1 D \varphi(\bar{x}_0, w_1)^T \nabla f(\bar{x}_1)).  
\end{equation*}
Since the formulate for $\tilde{g}_2$ in \Cref{eq:bmg-aoftrl} only depends on quantities with iteration index $t=0,1$, we may set $\alpha_{2}\tilde{g}_t = D \varphi(\bar{x}_0, w_1)^T(\nabla \mu(\bar{x}_1) - \nabla \mu (z_t) - \alpha_t \nabla f(\bar{x}_1))$. This gives the update
\begin{equation*}
w_2 = \frac{\beta_2}{\beta_1} w_1 - \beta_1 D \varphi(\bar{x}_0, w_1)^T(\nabla \mu(\bar{x}_1) - \nabla \mu (z_1)). 
\end{equation*}
Now assume the recursion holds up to time $t$. As before, we may choose $\alpha_{t+1} \tilde{g}_{t+1}$ according to the formula in \Cref{eq:bmg-aoftrl} since all quantities on the right-hand side depend on quantities computed at iteration $t$ or $t-1$. Subtituting this into \Cref{lem:oftrl-reduc}, we have
\begin{align*}
w_{t+1} &= \frac{\beta_t}{\beta_{t-1}} w_t - \beta_t \left(\alpha_{t+1} \tilde{g}_{t+1} + \alpha_{t} (D \varphi(\bar{x}_{t-1}, w_t)^T \nabla f(\bar{x}_t) - \tilde{g}_t) \right) \\
&=
\frac{\beta_t}{\beta_{t-1}} w_t - \beta_t \left(D \varphi(\bar{x}_{t-1}, w_t)^T (\nabla \mu(\bar{x}_t) - \nabla \mu(z_t) - \alpha_t \nabla f(\bar{x}_t)) + \alpha_t \tilde{g}_t\right. \\
&\quad \left. +\alpha_{t} (D \varphi(\bar{x}_{t-1}, w_t)^T \nabla f(\bar{x}_t) - \tilde{g}_t) \right) \\
&=
\frac{\beta_t}{\beta_{t-1}} w_t - \beta_t D \varphi(\bar{x}_{t-1}, w_t)^T (\nabla \mu(\bar{x}_t) - \nabla \mu(z_t)).
\end{align*}
AO-FTRL $\rightarrow$ BMG. The proof in the other direction follows similarly. First, note that for $\mu$ strictly convex, $\nabla \mu$ is invertible. Then, $z_1 = \nabla \mu^{-1}(\nabla \mu(x_1)- (\alpha_{2} \tilde{y}_2 + \alpha_1 \nabla f(x_1)))$. This target is permissible since $x_1$ is already computed and $\{\tilde{y}_t\}_{t=1}^T$ is given. Substituting this into the BMG meta-update in \Cref{eq:bmg-meta-update}, we find
\begin{align*}
w_{2} 
&= w_1 - \beta_1 D \varphi(x_0, w_1)^T(\nabla \mu(x_1) - \nabla \mu(\nabla \mu^{-1}(\nabla \mu(x_1)- (\alpha_{2} \tilde{y}_2 + \alpha_1 \nabla f(x_1)))) )\\
&= w_1 - \beta_1 D \varphi(x_0, w_1)^T (\alpha_2 \tilde{y}_2 + \alpha_1 \nabla f(x_1))\\
&=
w_1 - \beta_1 \left(\alpha_{2} \tilde{g}_{2} + \alpha_{1} (D \varphi(\bar{x}_{0}, w_1)^T \nabla f(\bar{x}_1) - \tilde{g}_1) \right),
\end{align*}
where the last line uses that $\tilde{g}_2$ is defined by $\alpha_{2}\tilde{g}_2  - \alpha_1 \tilde{g}_1 = D \varphi(\bar{x}_{0}, w_1)^T \tilde{y}_2$ and $\tilde{g}_1$ is arbitrary. Again, assume the recursion holds to time $t$. We then have 
\begin{align*}
w_{t+1} 
&= w_t - \beta_t D \varphi(x_{t-1}, w_t)^T \left(\nabla \mu (x_t) - \nabla \mu(z_t) \right)\\
&= w_t - \beta_t D \varphi(x_{t-1}, w_t)^T(\nabla \mu(x_t) \\
& \quad
- \nabla \mu(\nabla \mu^{-1}(\nabla \mu(x_t)- (\alpha_{t+1} \tilde{y}_{t+1} + \alpha_t \nabla f(x_t)))))\\
&= w_t - \beta_t D \varphi(x_{t-1}, w_t)^T(\alpha_{t+1} \tilde{y}_{t+1} + \alpha_t \nabla f(x_t)) \\
&= w_t - \beta_t (\alpha_{t+1} \tilde{g}_{t+1} + \alpha_t (D \varphi(x_{t-1}, w_t)^T \nabla f(x_t) - \tilde{g}_t)).
\end{align*}
\end{proof}

\end{document}